\definecolor{myBlue}{HTML}{0F1A5F}
\definecolor{myRed}{HTML}{721010}
\pgfplotsset{compat=1.14}	 %
\pgfplotsset{compat/show suggested version=false}
\pgfplotsset{every mark/.append style={solid}}
\newcommand{\xmark}{\ding{55}}%
\titlespacing*{\paragraph}{0pt}{0.35\baselineskip}{1em}
\title{No Training, No Problem: Rethinking Classifier-Free Guidance for Diffusion Models}
\author{Seyedmorteza Sadat\textsuperscript{1}, Manuel Kansy\textsuperscript{1}, Otmar Hilliges\textsuperscript{1}, Romann M.\ Weber\textsuperscript{2} \\
\textsuperscript{1}ETH Z\"urich, \textsuperscript{2}DisneyResearch\textbar{}Studios\\
\texttt{\{seyedmorteza.sadat, manuel.kansy, otmar.hilliges\}@inf.ethz.ch} \\
\texttt{\{romann.weber\}@disneyresearch.com}
}
\definecolor{C0}{rgb}{0.121569, 0.466667, 0.705882}
\definecolor{C1}{rgb}{1.000000, 0.498039, 0.054902}
\definecolor{C2}{rgb}{0.172549, 0.627451, 0.172549}
\definecolor{C3}{rgb}{0.839216, 0.152941, 0.156863}
\definecolor{C4}{rgb}{0.580392, 0.403922, 0.741176}
\definecolor{C5}{rgb}{0.549020, 0.337255, 0.294118}
\definecolor{C6}{rgb}{0.890196, 0.466667, 0.760784}
\definecolor{C7}{rgb}{0.498039, 0.498039, 0.498039}
\definecolor{C8}{rgb}{0.737255, 0.741176, 0.133333}
\definecolor{C9}{rgb}{0.090196, 0.745098, 0.811765}
\newcolumntype{Y}{>{\centering\arraybackslash}X}
\newcolumntype{C}{>{\hsize=.0\hsize\centering\arraybackslash}X}
\colorlet{LightGoldenrod}{White!40!Goldenrod}
\colorlet{LightGray}{White!90!Periwinkle}
\definecolor{LG}{gray}{0.95}
\definecolor{codegreen}{rgb}{0,0.6,0}
  \definecolor{codegray}{rgb}{0.5,0.5,0.5}
  \definecolor{codepurple}{rgb}{0.58,0,0.82}
  \definecolor{backcolour}{rgb}{0.95,0.95,0.92}
  \lstdefinestyle{mystyle}{
    backgroundcolor=\color{backcolour},
    commentstyle=\color{codegreen},
    keywordstyle=\color{magenta},
    numberstyle=\tiny\color{codegray},
    stringstyle=\color{codepurple},
    basicstyle=\ttfamily\footnotesize,
    breakatwhitespace=false,
    breaklines=true,
    captionpos=b,
    keepspaces=true,
    numbers=left,
    numbersep=5pt,
    showspaces=false,
    showstringspaces=false,
    showtabs=false,
    tabsize=2
  }
\newcommand{\wtsg}{w_{\textnormal{TSG}}}
\newcommand{\cfgnew}{ICG}
\newcommand{\tsg}{time-step guidance}
\newcommand{\tsgsh}{TSG}
\newcommand{\mc}[1]{\mathcal{#1}}
\def\eqref#1{equation~\ref{#1}}
\def\1{\bm{1}}
\newcommand{\dd}{\mathrm{d}}
\def\vzero{{\bm{0}}}
\def\vn{{\bm{n}}}
\def\vx{{\bm{x}}}
\def\vy{{\bm{y}}}
\def\vz{{\bm{z}}}
\def\mI{{\pmb{I}}}
\def\bmepsilon{{\bm{\epsilon}}}
\def\bmtheta{{\bm{\theta}}}
\def\bmmu{{\bm{\mu}}}
\DeclareMathAlphabet{\mathsfit}{\encodingdefault}{\sfdefault}{m}{sl}
\SetMathAlphabet{\mathsfit}{bold}{\encodingdefault}{\sfdefault}{bx}{n}
\def\mepsilon{{\bm{\epsilon}}}
\def\mtheta{{\bm{\theta}}}
\newcommand{\pdata}{p_{\textnormal{data}}}
\DeclareMathOperator*{\argmin}{arg\,min}
\newcommand{\norm}[1]{\left\lVert #1 \right\rVert}
\newcommand{\cond}{\left\vert\right.}
\newcommand{\normal}[2]{\mc{N}\prn{#1, #2}}
\DeclarePairedDelimiterX{\infdivx}[2]{(}{)}{%
  #1\delimsize\|#2%
}
\newcommand{\prn}[1]{\left( #1 \right)}
\newcommand{\zero}{\pmb{0}}
\DeclareDocumentCommand{\ex}{m o}{
   \mathbb{E}\IfValueT{#2}{_{#2}}\left[#1\right]
}
\DeclareMathOperator{\grad}{\nabla}
\DeclarePairedDelimiterX\Set[1]{\lbrace}{\rbrace}%
 {  #1 }
\def\ddefloop#1{\ifx\ddefloop#1\else\ddef{#1}\expandafter\ddefloop\fi}
\def\ddef#1{\expandafter\def\csname #1bb\endcsname{\ensuremath{\mathbb{#1}}}}
\def\ddefloop#1{\ifx\ddefloop#1\else\ddef{#1}\expandafter\ddefloop\fi}
\def\ddef#1{\expandafter\def\csname #1b\endcsname{\ensuremath{\mathbf{#1}}}}
\def\ddef#1{\expandafter\def\csname #1c\endcsname{\ensuremath{\mathcal{#1}}}}
\def\ddef#1{\expandafter\def\csname #1hat\endcsname{\ensuremath{\widehat{#1}}}}
\def\ddef#1{\expandafter\def\csname hc#1\endcsname{\ensuremath{\widehat{\mathcal{#1}}}}}
\def\ddef#1{\expandafter\def\csname #1til\endcsname{\ensuremath{\widetilde{#1}}}}
\def\ddef#1{\expandafter\def\csname tc#1\endcsname{\ensuremath{\widetilde{\mathcal{#1}}}}}
\def\ddef#1{\expandafter\def\csname #1Bar\endcsname{\ensuremath{\bar{#1}}}}
\newacronym{ldm}{LDM}{latent diffusion model}
\newacronym{vae}{VAE}{variational autoencoder}
\newacronym{sdvae}{SD-VAE}{Stable Diffusion VAE}
\newcommand{\tabCFGvsOurs}{
\begin{table}[t]
    \centering
    \vspace{-0.3cm}
    \caption{Quantitative comparison between CFG and \cfgnew. {\cfgnew} is able to achieve similar metrics to standard CFG by extracting the unconditional score from the conditional model itself.}
    \label{tab:main}
    \maxsizebox{\textwidth}{!}{
        \small
        \begin{booktabs}{
            colspec = {Q[l, m]Q[l, m]Q[l, m]Q[c, m]Q[c, m]Q[c, m]}, 
            cell{3,5,7,9}{3-Z} = {m, LightGray}, 
            cell{2,4,6,8}{1,2} = {r=2}{m},
            }
        \toprule
        Model & Architecture & Guidance & FID $\downarrow$ & Precision $\uparrow$ & Recall $\uparrow$ \\
        \midrule
        Stable Diffusion \citep{rombachHighResolutionImageSynthesis2022} & UNet & CFG & 20.13 & \textbf{0.69} & \textbf{0.54} \\
        & & {\cfgnew} (Ours) & \textbf{20.05} & \textbf{0.69} & 0.53 \\
        \midrule
        DiT-XL/2 \citep{peeblesScalableDiffusionModels2022} & Transformer & CFG & 5.56 & 0.81 & \textbf{0.66} \\
        & & {\cfgnew} (Ours) & \textbf{5.50} & \textbf{0.83} & 0.65 \\
        \midrule
        Pose-to-Image \citep{sadat2024cads} & UNet & CFG & 14.61 & 0.93 & 0.02 \\
        & & {\cfgnew} (Ours) & \textbf{13.46} & \textbf{0.94} & \textbf{0.03}\\
        \midrule
        MDM \citep{tevet2022human} & Transformer & CFG & 0.65 & \textbf{0.73} & - \\
        & & {\cfgnew} (Ours) & \textbf{0.47} & 0.71 & -\\
        \bottomrule 
        \end{booktabs}
    }

        \centering
        \vspace{0.125cm}
            \caption{Quantitative comparison between CFG and {\cfgnew} for EDM networks. Although these models are not trained with the CFG objective, guiding their generations using a separate unconditional module results in similar outcomes to using {\cfgnew}.}
            \label{tab:edm}
            \maxsizebox{\textwidth}{!}{
                \small
                \begin{booktabs}{
                    colspec = {Q[l, m]Q[l, m]Q[l, m]Q[c, m]Q[c, m]}, 
                    cell{3,5,7,9}{3-Z} = {m, LightGray}, 
                    cell{2,4}{1,2} = {r=2}{m},
                    }
                \toprule
                Model & Dataset & Guidance & FID $\downarrow$ & $\textnormal{FD}_\textnormal{DINOv2}$ $\downarrow$ \\
                \midrule
                EDM2-XS \citep{karras2023analyzing} & Imagenet & CFG & 3.36 & 79.94\\ 
                 & & {\cfgnew} (Ours) & \textbf{3.35} & \textbf{79.54}\\
                \midrule
                EDM \citep{karras2022elucidating} & CIFAR-10 & CFG & \textbf{1.87} & -\\ 
                 &  & {\cfgnew} (Ours) & \textbf{1.87} & -\\ 
                \bottomrule 
                \end{booktabs}
            }
            
    \end{table}
}
\newcommand{\tabTSG}{
\begin{table}[t!]
    \centering
    \vspace{-0.3cm}
    \caption{Quantitative comparison between the baseline sampling of the diffusion models and sampling with \tsgsh. {\tsgsh} significantly boosts quality (lower FID) across various setups.}
    \label{tab:tsg-main}
    \maxsizebox{\textwidth}{!}
    {
        \Large
        \begin{booktabs}{
            colspec = {Q[l, m]Q[l, m]Q[l, m]Q[l, m]Q[c, m]Q[c, m]Q[c, m]}, 
            cell{3,5,7,9}{4-Z} = {m, LightGray}, 
            cell{2,6}{1,2} = {r=4}{m},
            cell{2,4,6,8}{3} = {r=2}{m},
            }
        \toprule
        Model & Architecture & Type & Guidance & FID $\downarrow$ & Precision $\uparrow$ & Recall $\uparrow$ \\
        \midrule
        Stable Diffusion \citep{rombachHighResolutionImageSynthesis2022} & UNet & Unconditional & \xmark & 69.38 & 0.42 & 0.49\\
        & & & {\tsgsh} (Ours) & \textbf{56.65} & \textbf{0.54} & \textbf{0.54}\\
        \midrule
        & & Text-conditional & \xmark & 36.63 & 0.48 & \textbf{0.64}\\
        & & & {\tsgsh} (Ours) & \textbf{22.17} & \textbf{0.62} & 0.59\\

        \midrule
        DiT-XL/2 \citep{peeblesScalableDiffusionModels2022} & Transformer & Unconditional & \xmark & 48.67 & 0.48 & \textbf{0.59} \\
        &  &  & {\tsgsh} (Ours) &\textbf{ 29.03} & \textbf{0.69} & 0.55  \\
        \midrule
        &  & Class-conditional & \xmark & 15.49 & 0.64 & \textbf{0.74} \\
        &  &  &  {\tsgsh} (Ours) & \textbf{6.39} & \textbf{0.82 }& 0.65 \\
        \bottomrule 
        \end{booktabs}
    }
    \vspace{-0.3cm}
    \end{table}
}
\newcommand{\ablationTSGTab}{
    \begin{table}[t!]
        \vspace{0.1cm}
        \centering
        \caption{Ablation study examining various design elements in \tsgsh.}
        \label{table:ablation-tsg}
        \begin{subtable}[t]{0.325\textwidth}
            \centering
            \caption{Influence of the noise scale \(s\)}
            \scalebox{0.8}{
                \begin{booktabs}{colspec = {Q[c, 0.5cm]Q[c, 0.9cm]Q[c, 1.4cm]Q[c, 1.1cm]}, row{1-Z}={font=\footnotesize}}
                    \toprule
                    \(s\) & FID $\downarrow$ & Precision $\uparrow$  & Recall $\uparrow$\\
                    \midrule
                    1.0 & 10.23 & 0.69 & 0.35 \\
                    2.0 & \textbf{6.85} & \textbf{0.80} & \textbf{0.69} \\
                    2.5 & 7.94 & \textbf{0.80} & \textbf{0.69} \\
                    \bottomrule
                \end{booktabs}
            }
            \label{table:noise-scale}
        \end{subtable}
        \hfill
        \begin{subtable}[t]{0.325\textwidth}
            \centering
            \caption{Influence of \(\alpha\)}
            \scalebox{0.8}{
                \begin{booktabs}{colspec = {Q[c, 0.5cm]Q[c, 0.9cm]Q[c, 1.4cm]Q[c, 1.1cm]}, row{1-Z}={font=\footnotesize}}
                    \toprule
                    $\alpha$ & FID $\downarrow$ & Precision $\uparrow$  & Recall $\uparrow$\\
                    \midrule
                    0.75 & 7.22 & 0.82 & 0.62 \\
                    1.0 & \textbf{6.39} & \textbf{0.84} & {0.65} \\
                    1.25 & 6.47 & 0.78 & \textbf{{0.66}} \\
                    \bottomrule
                \end{booktabs}
            }
            \label{table:alpha}
        \end{subtable}
        \hfill
        \begin{subtable}[t]{0.325\textwidth}
            \centering
            \caption{Maximum layer index}
            \scalebox{0.8}{
                \begin{booktabs}{colspec = {Q[c, 0.5cm]Q[c, 0.9cm]Q[c, 1.4cm]Q[c, 1.1cm]}, row{1-Z}={font=\footnotesize}}
                    \toprule
                     Index & FID $\downarrow$ & Precision $\uparrow$  & Recall $\uparrow$\\
                    \midrule
                    5 & 7.84 & 0.75 & \textbf{0.69} \\
                    10 & \textbf{6.85} & 0.79 & {0.65} \\
                    15 & 7.65 & \textbf{0.82} & 0.65 \\
                    \bottomrule
                \end{booktabs}
            }

            \label{table:max-layer}
        \end{subtable}
    \end{table}
}
\newcommand{\samplingStepsTable}{
    \begin{table}[t!]
        \vspace{-0.45cm}
        \begin{minipage}{\linewidth}
            \centering
            \caption{Comparison between unguided sampling and {\tsgsh} using different number of sampling steps. {\tsgsh} is able to achieve significantly better FID compared to both unguided baselines.}
        \label{table:steps}
        \maxsizebox{\linewidth}{!}{
            \begin{booktabs}{colspec = {lcccc}, row{Z} = {LightGray}}
              \toprule
              {Guidance} & {\# Steps} & {FID} & {Precision} & {Recall} \\
              \midrule
              Unguided         & 100            & 15.49        & 0.6382             & \textbf{0.7437}          \\
              Unguided         & 200            & 12.94        & 0.6683             & 0.7387          \\
              TSG              & 100            & \textbf{6.39}         & \textbf{0.8198}             & 0.6489          \\
              \bottomrule
            \end{booktabs}
            }
        \end{minipage}
    \end{table}
}
\newcommand{\SagPagTable}{
    \begin{table}[t!]
        \begin{minipage}{\linewidth}
            \centering
            \caption{Comparison between TSG and other guidance methods. TSG achieves better quality compared to SAG and PAG while requiring no specific assumption about the underlying architecture of the diffusion model.}
        \label{table:sag-pag}
        \maxsizebox{\linewidth}{!}{
            \begin{booktabs}{colspec = {lccc}, row{Z} = {LightGray}}
              \toprule
              {Guidance} & {FID} & {Precision} & {Recall} \\
              \midrule
              Unguided         & 69.38 & 0.42 & 0.49          \\
              SAG \citep{selfAttentionGuidance}         & 59.34 & 0.48 & 0.53         \\
              PAG \citep{PAG}             & 62.85 & 0.42 & 0.51  \\
              TSG (ours)              & \textbf{56.65} & \textbf{0.54} & \textbf{0.54}  \\
              \bottomrule
            \end{booktabs}
            }
        \end{minipage}
    \end{table}
}
\newcommand{\ditGuidancePlot}{
    \begin{figure}[t!]
        \centering
        \vspace{-0.4cm}
        \begin{minipage}{\textwidth}
            \resizebox{\textwidth}{!}{
                \begin{tikzpicture}
                    \begin{groupplot}[
                            group style={
                                    group size=4 by 1,
                                    horizontal sep=1.25cm,
                                },
                            xlabel={$w$},
                            ymajorgrids=true,
                            xmajorgrids=true,
                            grid style=dashed,
                            major grid style = {lightgray},
                            tick label style={font=\normalsize},
                            label style={font=\huge},
                            title style={font=\huge},
                            legend pos={north east}, legend cell align={left},
                            legend style={font=\large},
                            scale only axis,
                            width=0.75\textwidth,
                            height=0.475\textwidth,
                        ]
                        \nextgroupplot[title={FID}]
                        \addplot [C4, mark=*, very thick, mark options={solid}] table [x index=0, y index=1, col sep=space] {data/dit.dat};
                        \nextgroupplot[title={Precision}]
                        \addplot [C4, mark=*, very thick, mark options={solid}] table [x index=0, y index=3, col sep=space] {data/dit.dat};
                        \nextgroupplot[title={Recall}]
                        \addplot [C4, mark=*, very thick, mark options={solid}] table [x index=0, y index=4, col sep=space] {data/dit.dat};
                        \legend{DiT-XL/2}
                    \end{groupplot}
                \end{tikzpicture}
            }
            \vspace{0.2cm}
            \resizebox{\textwidth}{!}{
                \begin{tikzpicture}
                    \begin{groupplot}[
                            group style={
                                    group size=4 by 1,
                                    horizontal sep=1.25cm,
                                },
                            xlabel={$w$},
                            ymajorgrids=true,
                            xmajorgrids=true,
                            grid style=dashed,
                            major grid style = {lightgray},
                            tick label style={font=\normalsize},
                            label style={font=\huge},
                            title style={font=\huge},
                            legend pos={north east}, legend cell align={left},
                            legend style={font=\large},
                            scale only axis,
                            width=0.75\textwidth,
                            height=0.475\textwidth,
                        ]
                        \nextgroupplot[]
                        \addplot [C3, mark=*, very thick, mark options={solid}] table [x index=0, y index=2, col sep=space] {data/sd.dat};
                        \nextgroupplot[]
                        \addplot [C3, mark=*, very thick, mark options={solid}] table [x index=0, y index=4, col sep=space] {data/sd.dat};
                        \nextgroupplot[]
                        \addplot [C3, mark=*, very thick, mark options={solid}] table [x index=0, y index=5, col sep=space] {data/sd.dat};
                        \legend{Stable Diffusion}
                    \end{groupplot}
                \end{tikzpicture}
            }
        \end{minipage}
        \caption{Behavior of {\cfgnew} as the guidance scale increases. Similar to CFG, {\cfgnew} trades diversity (lower recall) for quality (higher precision) at higher guidance scales.}
        \label{fig:ditGuidance}
    \end{figure}
}
\newcommand{\algAppendixTFG}{

    \begin{figure}[t!]
        \centering
        \begin{minipage}{\textwidth}
            \begin{algorithm}[H]
                \caption{Sampling with {\cfgnew}}
                \label{algo:sampling-with-tfg}
                \begin{algorithmic}[1]
                    \Require $w_{\textnormal{\cfgnew}}$: {\cfgnew} strength
                    \Require $\vy$: Input condition

                    \State Initial value: $\vz_{1} \sim \mathcal{N}(\vzero, \sigma_{\textnormal{max}}^2\mI)$
                    \For{$t=\Set{1, 1-\delta t,\dotsc, 0}$}

                    \vspace{0.2cm}

                    \State $\!\!\circ$ \hl{Pick a random $\hat{\vy}$ independent of $\vz_t$ (Gaussian noise or from the conditioning space).}

                    \vspace{0.2cm}

                    \State $\!\!\circ$ Compute the {\cfgnew} guided output at $t$:
     
                     $\hat{D}_{\textnormal{\cfgnew}}(\vz_t, t, \vy) =  D(\vz_t, t, \hat{\vy}) + w_{\textnormal{\cfgnew}} (D(\vz_t, t, {\vy}) - D(\vz_t, t, \hat{\vy}))$.

                    \vspace{0.2cm}

                    \State $\!\!\circ$ Perform one sampling step (e.g. one step of DDPM):

                     $\vz_{t-1} = {\tt{diffusion\_reverse}}(\hat{D}_{\textnormal{\cfgnew}}, \vz_{t}, t)$.
                    \vspace{0.2cm}

                    \EndFor
                    \State \textbf{return} $\vz_{\vzero}$
                \end{algorithmic}
            \end{algorithm}
        \end{minipage}
    \end{figure}
}
\newcommand{\algAppendixTSG}{

    \begin{figure}[t!]
        \centering
        \begin{minipage}{\textwidth}
            \begin{algorithm}[H]
                \caption{Sampling with {\tsgsh}}
                \label{algo:sampling-with-tsg}
                \begin{algorithmic}[1]
                    \Require $w_{\textnormal{\tsgsh}}$: {\tsgsh} strength
                    \Require $(s, \alpha)$: {\tsgsh} hyperparameters
                    \Require $\vy$: Input condition (optional)

                    \State Initial value: $\vz_{1} \sim \mathcal{N}(\vzero, \sigma_{\textnormal{max}}^2\mI)$
                    \For{$t=\Set{1, 1-\delta t, \dotsc, 0}$}

                    \vspace{0.2cm}

                    \State $\!\!\circ$ \hl{Perturb the time-step embedding $t_{\textnormal{emb}}$ to get $\hat{t}_{\textnormal{emb}}$:
                    
                    $\hat{t}_{\textnormal{emb}} = t_{\textnormal{emb}} + s t^{\alpha} \vn$, where $\vn \sim \normal{\zero}{\mI}$.}

                    \vspace{0.2cm}

                    \State $\!\!\circ$ Compute the {\tsgsh} guided output at $t$:

                     $\hat{D}_{\textnormal{TSG}}(\vz_t, t, \vy) =  D(\vz_t, \hat{t}_{\textnormal{emb}}, {\vy}) + w_{\textnormal{\tsgsh}} (D(\vz_t, t_{\textnormal{emb}}, {\vy}) - D(\vz_t, \hat{t}_{\textnormal{emb}}, {\vy}))$.

                    \vspace{0.2cm}

                    \State $\!\!\circ$ Perform one sampling step (e.g. one step of DDPM):

                     $\vz_{t-1} = {\tt{diffusion\_reverse}}(\hat{D}_{\textnormal{TSG}}, \vz_{t}, t)$.
                    \vspace{0.2cm}

                    \EndFor
                    \State \textbf{return} $\vz_{\vzero}$
                \end{algorithmic}
            \end{algorithm}
        \end{minipage}
    \end{figure}
}
\newcommand{\ditTSGPlot}{
    \begin{figure}[t!]
        \centering
            \resizebox{\textwidth}{!}{
                \begin{tikzpicture}
                    \begin{groupplot}[
                            group style={
                                    group size=4 by 1,
                                    horizontal sep=1.25cm,
                                },
                            xlabel={$w$},
                            ymajorgrids=true,
                            xmajorgrids=true,
                            grid style=dashed,
                            major grid style = {lightgray},
                            tick label style={font=\normalsize},
                            label style={font=\Large},
                            title style={font=\Large},
                            legend pos={north east}, legend cell align={left},
                            legend style={font=\small},
                            scale only axis,
                            width=0.75*\textwidth,
                            height=0.5*\textwidth,
                        ]
                        \nextgroupplot[title={FID}]
                        \addplot [C4, mark=*, very thick, mark options={solid}] table [x index=0, y index=1, col sep=space] {data/dit_tsg.dat};
                        \nextgroupplot[title={Precision}]
                        \addplot [C4, mark=*, very thick, mark options={solid}] table [x index=0, y index=3, col sep=space] {data/dit_tsg.dat};
                        \nextgroupplot[title={Recall}]
                        \addplot [C4, mark=*, very thick, mark options={solid}] table [x index=0, y index=4, col sep=space] {data/dit_tsg.dat};
                        \legend{DiT-XL/2}
                    \end{groupplot}
                \end{tikzpicture}
            }

            \caption{Behavior of {\tsgsh} as the guidance scale increases for DiT-XL/2. Similar to CFG, {\tsgsh} also significantly improves FID by trading diversity (recall) with quality (precision).}
        \label{fig:dit-tsg-plot}

        \vspace{-0.3cm}
    \end{figure}
}
\newcommand{\ICGTrainingPlot}{
    \begin{figure}[t!]
        \vspace{-0.1cm}
        \centering
            \resizebox{\textwidth}{!}{
                \begin{tikzpicture}
                    \begin{groupplot}[
                            group style={
                                    group size=4 by 1,
                                    horizontal sep=1.25cm,
                                },
                            xlabel={Training Iter. ($\times 10^3)$},
                            ymajorgrids=true,
                            xmajorgrids=true,
                            grid style=dashed,
                            major grid style = {lightgray},
                            tick label style={font=\normalsize},
                            label style={font=\huge},
                            title style={font=\huge},
                            legend pos={south east}, legend cell align={left},
                            legend style={font=\large},
                            scale only axis,
                        ]
                        \nextgroupplot[title={FID}, ymode=log]
                        \addplot [C0, mark=*, very thick, mark options={solid}, dashed] table [x index=0, y index=2, col sep=comma] {data/CFG_Metrics.csv};
                        \addplot [C1, mark=*, very thick, mark options={solid}] table [x index=0, y index=2, col sep=comma] {data/ICG_Metrics.csv};
                        \nextgroupplot[title={IS}]
                        \addplot [C0, mark=*, very thick, mark options={solid}, dashed] table [x index=0, y index=1, col sep=comma] {data/CFG_Metrics.csv};
                        \addplot [C1, mark=*, very thick, mark options={solid}] table [x index=0, y index=1, col sep=comma] {data/ICG_Metrics.csv};
                        \nextgroupplot[title={Precision}]
                        \addplot [C0, mark=*, very thick, mark options={solid}, dashed] table [x index=0, y index=4, col sep=comma] {data/CFG_Metrics.csv};
                        \addplot [C1, mark=*, very thick, mark options={solid}] table [x index=0, y index=4, col sep=comma] {data/ICG_Metrics.csv};
                        \nextgroupplot[title={Recall}]
                        \addplot [C0, mark=*, very thick, mark options={solid}, dashed] table [x index=0, y index=5, col sep=comma] {data/CFG_Metrics.csv};
                        \addplot [C1, mark=*, very thick, mark options={solid}] table [x index=0, y index=5, col sep=comma] {data/ICG_Metrics.csv};
                        \legend{CFG, {\cfgnew} (Ours)}
                    \end{groupplot}
                \end{tikzpicture}
            }
            \caption{Comparison of CFG and {\cfgnew} during training of a DiT model on ImageNet. Compared to standard CFG with label dropping, using {\cfgnew} with a purely conditional model achieves better FID across all checkpoints. This indicates that the iterations spent on the CFG objective could be better allocated to training the conditional score, ultimately leading to a better model.}
        \label{fig:dit-training-plot}
        \vspace{-0.3cm}
    \end{figure}
}
\begin{document}

\maketitle

\begin{abstract}
   Classifier-free guidance (CFG) has become the standard method for enhancing the quality of conditional diffusion models. However, employing CFG requires either training an unconditional model alongside the main diffusion model or modifying the training procedure by periodically inserting a null condition.  There is also no clear extension of CFG to unconditional models. In this paper, we revisit the core principles of CFG and introduce a new method, independent condition guidance (ICG), which provides the benefits of CFG without the need for any special training procedures. Our approach streamlines the training process of conditional diffusion models and can also be applied during inference on any pre-trained conditional model. Additionally, by leveraging the time-step information encoded in all diffusion networks, we propose an extension of CFG, called time-step guidance (TSG), which can be applied to \emph{any} diffusion model, including unconditional ones. Our guidance techniques are easy to implement and have the same sampling cost as CFG. Through extensive experiments, we demonstrate that ICG matches the performance of standard CFG across various conditional diffusion models. Moreover, we show that TSG improves generation quality in a manner similar to CFG, without relying on any conditional information. 
 \end{abstract}

\section{Introduction}
Diffusion models have recently emerged as the main methodology behind many successful generative models \citep{sohl2015deep,hoDenoisingDiffusionProbabilistic2020,dhariwalDiffusionModelsBeat2021,rombachHighResolutionImageSynthesis2022,DBLP:conf/nips/SongE19,score-sde}. At the core of such models lies a diffusion process that gradually adds noise to the data until the corrupted points are indistinguishable from pure noise. During inference, a denoiser trained to reverse this process is used to gradually refine pure-noise samples until they resemble the clean data. While the theory suggests that standard sampling from diffusion models should yield high-quality images, this does not generally hold in practice, and guidance methods are often required to increase the quality of generations, albeit at the expense of diversity \citep{dhariwalDiffusionModelsBeat2021,hoClassifierFreeDiffusionGuidance2022}. Classifier guidance  \citep{dhariwalDiffusionModelsBeat2021} introduced this quality-boosting concept by utilizing the gradient of a classifier trained on noisy images to increase the class-likelihood of generated samples. Later, classifier-free guidance (CFG) \citep{hoClassifierFreeDiffusionGuidance2022} was proposed, allowing diffusion models to simulate the same behavior as classifier guidance without using an explicit classifier. Since then, CFG has been applied to other conditional generation tasks, such as text-to-image synthesis \citep{glide} and text-to-3D generation \citep{DreamFusion}.

In addition to CFG's trading diversity for quality, it has the following two practical limitations. First, it requires a dedicated, pre-defined training process on an \emph{auxiliary} task in order to learn the unconditional score function. This typically involves training a separate unconditional model or, more commonly, randomly dropping the conditioning vector and replacing it with a null vector during training. This approach reduces training efficiency, as the model now needs to be trained on two different tasks. Moreover, replacing the condition may not be straightforward when multiple conditioning signals---such as text, images, and audio---are used simultaneously or when the null vector (often the zero vector) carries specific meaning. We demonstrate that this dedicated auxiliary training process is unnecessary. A second limitation is that there has been no clear way to extend the benefits of classifier-free guidance beyond conditional models to unconditional generation. We introduce a method that closes this gap.

We revisit the principles behind classifier-free guidance and show both theoretically and empirically that similar quality-boosting behavior can be achieved without the need for additional auxiliary training of an unconditional model. The main idea is that by using a conditioning vector \emph{independent} of the input data, the conditional score function becomes equivalent to the unconditional score. This insight leads us to propose \emph{independent condition guidance} (ICG), a method that replicates the behavior of CFG at inference time without requiring auxiliary training of an unconditional model, i.e., without needing explicit access to the unconditional score function. In \Cref{sec:cfg-vs-icg}, we show that the auxiliary training of the unconditional model in CFG can be detrimental to training efficiency, and similar or better performance can be achieved by training only a purely conditional model and using ICG instead.

Inspired by the above, we also introduce a novel technique to extend classifier-free guidance to a more general setting that includes unconditional generation. We argue that by using a perturbed version of the time-step embedding in diffusion models, one can create a guidance signal similar to CFG to improve the quality of generations. This method, which we call \emph{time-step guidance} (TSG), aims to improve the accuracy of denoising at each sampling step by leveraging the time-step information learned by the diffusion model to steer sampling trajectories toward better noise-removal paths.

{\cfgnew} and {\tsgsh} are easy to implement, do not require additional fine-tuning of the underlying diffusion models, and have the same sampling cost as CFG. Through extensive experiments, we empirically verify that: 1) ICG offers performance similar to CFG and can be readily applied to models that are not trained with the CFG objective in mind, such as EDM \citep{karras2022elucidating}; and 2) TSG improves output quality in a manner similar to CFG for both conditional and unconditional generation.

The core contributions of our work are as follows:
\begin{enumerate*}[label=(\roman*)]
    \item We revisit the principles of classifier-free guidance and offer an efficient, theoretically motivated method to employ CFG without requiring any auxiliary training of an unconditional model, greatly simplifying the training process of conditional diffusion models and improving training efficiency relative to the standard approach.
    \item We offer an extension of CFG that is generally applicable to all diffusion models, whether conditional or unconditional.
    \item We demonstrate empirically that our guidance techniques achieve the quality-boosting benefits of CFG across various setups and network architectures.
\end{enumerate*}
\section{Related work}
Score-based diffusion models \citep{DBLP:conf/nips/SongE19,score-sde,sohl2015deep,hoDenoisingDiffusionProbabilistic2020} learn the data distribution by reversing a forward diffusion process that progressively transforms the data into Gaussian noise. These models have quickly surpassed the fidelity and diversity of previous generative modeling methods \citep{nichol2021improved,dhariwalDiffusionModelsBeat2021}, achieving state-of-the-art results in various domains, including unconditional image generation \citep{dhariwalDiffusionModelsBeat2021,karras2022elucidating}, text-to-image generation \citep{dalle2,saharia2022photorealistic,balaji2022ediffi,rombachHighResolutionImageSynthesis2022,sdxl,yu2022scaling}, video generation \citep{blattmann2023align,stableVideoDiffusion,gupta2023photorealistic}, image-to-image translation \citep{saharia2022palette,liu20232i2sb}, motion synthesis \citep{tevet2022human,Tseng_2023_CVPR}, and audio generation \citep{WaveGrad,DiffWave,huang2023noise2music}.

Since the development of the DDPM model \citep{hoDenoisingDiffusionProbabilistic2020}, many advancements have been proposed including improved network architectures \citep{hoogeboom2023simple,karras2023analyzing,peeblesScalableDiffusionModels2022,dhariwalDiffusionModelsBeat2021}, sampling algorithms \citep{songDenoisingDiffusionImplicit2022,karras2022elucidating,plms,dpm_solver,salimansProgressiveDistillationFast2022}, and training methods \citep{nichol2021improved,karras2022elucidating,score-sde,salimansProgressiveDistillationFast2022,rombachHighResolutionImageSynthesis2022}. Despite these recent advances, diffusion guidance, including classifier and classifier-free guidance \citep{dhariwalDiffusionModelsBeat2021, hoClassifierFreeDiffusionGuidance2022}, still {plays an essential role} in improving the quality of generations as well as increasing the alignment between the condition and the output image \citep{glide}.

SAG \citep{selfAttentionGuidance} and PAG \citep{PAG} have recently been proposed to increase the quality of UNet-based diffusion models by modifying the predictions of the self-attention layers. Our method is complementary to these approaches, as one can combine {\cfgnew} updates with the update signal from the perturbed attention modules \citep{selfAttentionGuidance}. In addition, we make no assumptions about the network architecture.

Another line of work includes guiding the generation of the diffusion model with a differentiable loss function or an off-the-shelf classifier \citep{song2023loss,chung2022diffusion,yu2023freedom,bansal2023universal,he2023manifold}. These methods are primarily focused on solving inverse problems, typically with unconditional models, while we are instead concerned with achieving the benefits of CFG in conditional models without any additional training requirements. With TSG, we also generalize our approach to extend CFG-like benefits to unconditional models.

Perturbing the condition vector is employed in CADS \citep{sadat2024cads} to increase the diversity of generations. CADS differs from {\cfgnew} in focusing on the \emph{conditional} branch to improve diversity, while {\cfgnew} is concerned with the \emph{unconditional} branch to simulate CFG. Since CADS is designed to enhance the diversity of CFG, it can be used alongside {\cfgnew} to improve the diversity of output at high guidance scales (see \Cref{sec:cads}).  
\section{Background}

This section provides an overview of diffusion models. Let \(\vx \sim \pdata(\vx)\) be a data point, $t \in [0,1]$ be the time step, and \(\vz_t = \vx + \sigma(t) \mepsilon\) be the forward process of the diffusion model that adds noise to the data. Here $\sigma(t)$ is the noise schedule and determines how much information is destroyed at each time step $t$, with $\sigma(0) = 0$ and $\sigma(1) = \sigma_{\textnormal{max}}$. \citet{karras2022elucidating} showed that this forward process corresponds to the ordinary differential equation (ODE) 
\begin{equation}\label{eq:diffusion-ode}
    \odif{\vz_t} = - \dot{\sigma}(t) \sigma(t) \grad_{\vz_t} \log p_t(\vz_t) \dd t
\end{equation}
or, equivalently, a stochastic differential equation (SDE) given by
\begin{equation}\label{eq:diffusion-sde}
 \odif{\vz_t} = - \dot{\sigma}(t)\sigma(t)\grad_{\vz_t} \log p_t(\vz_t) \odif{t} - \beta(t) \sigma(t)^2 \grad_{\vz_t} \log p_t(\vz_t) \odif{t} + \sqrt{2 \beta(t) } \sigma(t)\odif{\omega_t}.
\end{equation}
Here $\odif{\omega_t}$ is the standard Wiener process, and $p_t(\vz_t)$ is the time-dependent distribution of noisy samples, with $p_0=\pdata$ and  $p_1=\normal{\zero}{\sigma_{\textnormal{max}}^2\mI}$. Assuming that we have access to the time-dependent score function \(\grad_{\vz_t} \log p_t(\vz_t)\), we can sample from the data distribution \(\pdata\) by solving the ODE or SDE backward in time (from \(t=1\) to \(t=0\)). The unknown score function $\grad_{\vz_t} \log p_t(\vz_t)$ is estimated via a neural denoiser \(D_{\mtheta}(\vz_t, t)\) that is trained to predict the clean samples \(\vx\) from the corresponding noisy samples $\vz_t$. The framework allows for conditional generation by training a denoiser $D_{\mtheta}(\vz_t, t, \vy)$ that accepts additional input signals $\vy$, such as class labels or text prompts.

\paragraph{Training objective}
Given a noisy sample \(\vz_t\) at time step $t$, the denoiser $D_{\mtheta}(\vz_t, t, \vy)$ with parameters \(\mtheta\) can be trained with the standard MSE loss (also called denoising score matching loss)
\begin{equation}
    \argmin_{\mtheta} \ex{\norm{D_{\mtheta}(\vz_t, t, \vy) - \vx}^2}[t].
\end{equation} 
The denoiser approximates the time-dependent conditional score function $\grad_{\vz_t} \log p_t(\vz_t | \vy)$ via 
\begin{equation}\label{eq:score} 
    \grad_{\vz_t} \log p_t(\vz_t | \vy) \approx \frac{D_{\mtheta}(\vz_t, t, \vy) -\vz_t}{\sigma(t)^2}.
\end{equation}

\paragraph{\textbf{Classifier-free guidance} (CFG)} CFG is an inference method for improving the quality of generated outputs by mixing the predictions of a conditional and an unconditional model \citep{hoClassifierFreeDiffusionGuidance2022}. 
Specifically, given a null condition \(\vy_{\textnormal{null}} = \varnothing\) corresponding to the unconditional case, CFG modifies the output of the denoiser at each sampling step according to
\begin{equation}\label{eq:cfg}
    \hat{D}_{\mtheta}(\vz_t, t, \vy) =   D_{\mtheta}(\vz_t, t,\vy_{\textnormal{null}}) + w_{\textnormal{CFG}} \prn{D_{\mtheta}(\vz_t, t, \vy) - D_{\mtheta}(\vz_t, t,\vy_{\textnormal{null}})},
\end{equation}
where $w_{\textnormal{CFG}} = 1$ corresponds to the non-guided case. The unconditional model \(D_{\mtheta}(\vz_t, t, \vy_{\textnormal{null}})\) is trained by randomly assigning the null condition \(\vy_{\textnormal{null}} = \varnothing\) to the input of the denoiser with probability $p$, where we normally have $p \in [0.1, 0.2]$. One can also train a separate denoiser to estimate the unconditional score in \Cref{eq:cfg} \citep{karras2023analyzing}. Similar to the truncation method in GANs \citep{brockLargeScaleGAN2019}, CFG increases the quality of individual images at the expense of less diversity \citep{pml2Book}.

\section{Revisiting classifier-free guidance}\label{sec:icg-thoery}
We now show how a conditional model can be used to simulate the behavior of classifier-free guidance, without needing any auxiliary training to learn the unconditional score function. The analysis in this section is inspired by \citet{sadat2024cads}. 

First, note that at each time step \(t\), classifier-free guidance implicitly uses the conditional score \(\grad_{\vz_t} \log p_t(\vz_t \cond \vy)\) and the unconditional score \(\grad_{\vz_t} \log p_t(\vz_t)\) to guide the sampling process. From Bayes' theorem, we can write \(p_t(\vz_t \cond \vy) = \frac{p_t(\vy \cond \vz_t)p_t(\vz_t)}{p_t(\vy)}\), which gives us 
\begin{equation} 
    \grad_{\vz_t} \log p_t(\vz_t \cond \vy) = \grad_{\vz_t} \log p_t(\vz_t) + \grad_{\vz_t} \log p_t(\vy \cond \vz_t).
\end{equation}
Next, assume that we replace the condition $\vy$ with a random vector $\hat{\vy} \sim q(\hat{\vy})$ that is independent of the input $\vz_t$. In this case, the ``classifier'' $p_t(\hat{\vy} \cond \vz_t)$ is equal to $q(\hat{\vy})$, which gives us
\begin{equation}\label{eq:independent-score}
    \grad_{\vz_t} \log p_t(\vz_t \cond \hat{\vy}) \approx \grad_{\vz_t} \log p_t(\vz_t) + \grad_{\vz_t} \log q(\hat{\vy}) = \grad_{\vz_t} \log p_t(\vz_t).
\end{equation}
This suggests that we can estimate the unconditional score purely based on the conditional model by replacing the condition $\vy$  with an independent vector $\hat{\vy}$. We argue as a result that there is no need to train a separate model $D_{\mtheta}(\vz_t, t,\vy_{\textnormal{null}})$ to apply classifier-free guidance, as we can use the conditional model itself to predict the score of the unconditional distribution as long as we pick an input condition that is independent of $\vz_t$. We call this method \emph{independent condition guidance} ({\cfgnew}) for the rest of the paper.

In reality, the scores in \Cref{eq:independent-score} are approximated by finite-capacity parametric models on \emph{dependent} data drawn from $p_t(\vz_t, \vy)$.\footnote{However, at the highest noise scales, $p_t(\vz_t, \vy) \approx p_t(\vz_t) p(\vy)$.} In this case, $D_{\mtheta}(\vz_t, t,\hat{\vy})$ will not necessarily exactly equal $D_{\mtheta}(\vz_t, t,\vy_{\textnormal{null}})$ since independent $\hat{\vy}$ and $\vz_t$ are technically out of distribution relative to what the model was trained on. We have not found this to be an issue in practice, and further, the expected error in the unconditional score estimate can be made arbitrarily small through both the capacity of $D_{\mtheta}$ and the choice of $q(\hat{\vy})$. We provide an analysis in the appendix.

\paragraph{Implementation details} 
In practice, we experiment with two options for the independent condition. First, $\hat{\vy}$ can be drawn from a Gaussian distribution with a suitable standard deviation so that $\hat{\vy}$ matches the scale of the actual conditioning vector ${\vy}$. Second, a random condition from the conditioning space, such as a random class label or random clip tokens, can be chosen as the independent $\hat{\vy}$. We show in \Cref{sec:ablation} that both methods perform similarly. However, there may be a slight preference for the random condition over Gaussian noise, as it stays closer to the conditioning distribution that the diffusion model was trained on.

\section{Time-step guidance}
Inspired by ICG, we next offer an extension of classifier-free guidance that can be used with any model, including unconditional networks. We begin our analysis with class-conditional models and subsequently extend it to a more general setting. 

In the class-conditional case, the embedding vector of the class is typically added to the embedding vector of the time step $t$ to compute the input condition of the diffusion network. Hence, in practice, CFG essentially uses the outputs of the diffusion network for two different input embeddings and takes their difference as the update direction. Thus, we might directly utilize the time-step embedding of each diffusion model as a means to define a similar guidance signal. This leads to a novel method that we refer to as \emph{time-step guidance} (TSG), which, like CFG, increases the quality of generations but, unlike CFG, is applicable even to unconditional models. 

In this method, we compute the model outputs for the clean time-step embedding and a perturbed embedding and use their difference to guide the sampling. More specifically, at each time step $t$, we update the output via
\begin{equation}
    \hat{D}_{\mtheta}(\vz_t, t) =   D_{\mtheta}(\vz_t, \tilde{t}) + \wtsg \prn{D_{\mtheta}(\vz_t, t) - D_{\mtheta}(\vz_t, \tilde{t})},
\end{equation}
where $\tilde{t}$ is the perturbed version of $t$. 
The intuition behind {\tsgsh} is that at each time step $t$, altering the time-step embedding of the network leads to denoised outputs with either insufficient or excessive noise removal (see \Cref{fig:tsg-intuition} in the appendix). Consequently, these outputs can be exploited to prevent the network from going toward undesirable predictions, thus increasing the accuracy of the score predictions at each time step. As we show below, {\tsgsh} is related to stochastic Langevin dynamics in terms of the first-order approximation, and hence, is expected to improve generation quality.

\paragraph{Connection to Langevin dynamics} Let $\tilde{t} = t + \Delta t$, where  $\Delta t$ is a small perturbation.  Using a Taylor expansion, we get $D_{\mtheta}(\vz_t, \tilde{t}) = D_{\mtheta}(\vz_t, {t}) + \pdv{D_{\mtheta}(\vz_t, {t})}{t}\Delta t$. Hence, $\hat{D}_{\mtheta}(\vz_t, \tilde{t}) = D_{\mtheta}(\vz_t, {t}) + (1 - \wtsg) \pdv{D_{\mtheta}(\vz_t, {t})}{t}\Delta t$. Based on \Cref{eq:score}, the score function is equal to
\begin{equation}
    \grad_{\vz_t} \log \hat{p}_t(\vz_t) = \grad_{\vz_t} \log p_t(\vz_t) + \frac{1 - \wtsg}{\sigma(t)^2} \pdv{D_{\mtheta}(\vz_t, {t})}{t}\Delta t.
\end{equation}
Now, if we follow the Euler sampling step for solving \Cref{eq:diffusion-ode}, i.e. we define the update rule as $\vz_{t-1} = \vz_{t} + \eta_t \grad_{\vz_t} \log \hat{p}_t(\vz_t)$, then the modified sampling step after time-step guidance will be equal to 
\begin{equation}
    \vz_{t-1} = \vz_{t} + \eta_t \grad_{\vz_t} \log p_t(\vz_t) + \eta_t \frac{1 - \wtsg}{\sigma(t)^2} \pdv{D_{\mtheta}(\vz_t, {t})}{t}\Delta t.
\end{equation}
Assuming that $\Delta t$ is a Gaussian random variable with zero mean, the update rule resembles a Langevin dynamics step, where the noise strength is determined based on the network behavior as represented by $\pdv{D_{\mtheta}(\vz_t, {t})}{t}$. As Langevin dynamics is known to increase the quality of sampling from a given distribution by compensating for the errors happening at each sampling step, we argue that {\tsgsh} also behaves similarly in terms of first-order approximation.

\paragraph{Implementation details} In practice, we implement {\tsgsh} by perturbing the time-step embedding with zero-mean Gaussian noise according to $\tilde{t}_{\textnormal{emb}} = t_{\textnormal{emb}} + s t^{\alpha} \vn$ where $\vn \sim \normal{\zero}{\mI}$ and $s t^{\alpha}$ determines the noise scale at each time step $t$. We choose $s$ and $\alpha$ such that the scale of the noise portion becomes comparable to the scale of the time-step embedding $t_{\textnormal{emb}}$. Empirically, we also find that it is sometimes beneficial to apply the perturbed embeddings only to a portion of layers in the diffusion network, e.g., using $\tilde{t}_{\textnormal{emb}}$ for the first 10 layers and ${t}_{\textnormal{emb}}$ for the rest of layers. We provide ablations on these hyperparameters in \Cref{sec:ablation}.
\section{Experiments}

In this section, we rigorously evaluate {\cfgnew} and demonstrate its ability to simulate the behavior of CFG across several conditional models. Additionally, we show that {\tsgsh} improves the quality of both conditional and unconditional generations compared to the non-guided sampling baseline. 

\paragraph{Setup} All experiments are conducted via pre-trained checkpoints provided by official implementations. We use the recommended sampler that comes with each model, such as the EDM sampler for EDM networks \citep{karras2022elucidating}, DPM++ \citep{lu2022dpm} for Stable Diffusion \citep{rombachHighResolutionImageSynthesis2022}, and DDPM \citep{hoDenoisingDiffusionProbabilistic2020} for DiT-XL/2 \citep{peeblesScalableDiffusionModels2022}.

\paragraph{Evaluation} We use Fr\'echet Inception Distance (FID) \citep{fid} as the main metric to measure both quality and diversity due to its alignment with human judgment. As FID is known to be sensitive to small implementation details, we ensure that models under comparison follow the same evaluation setup. For completeness, we also report precision \citep{improvedPR} as a standalone quality metric and recall \citep{improvedPR} as a diversity metric whenever possible. $\textnormal{FD}_\textnormal{DINOv2}$ \citep{stein2024exposing} is also reported for the EDM2 model \citep{karras2023analyzing}. 

\subsection{Comparison between {\cfgnew} and CFG}\label{sec:cfg-vs-icg} 

\paragraph{Qualitative results}
The qualitative comparisons between {\cfgnew} and CFG are given in \Cref{fig:cfg-vs-ours} for Stable Diffusion \citep{rombachHighResolutionImageSynthesis2022} and DiT-XL/2 \citep{peeblesScalableDiffusionModels2022} models. \Cref{fig:edm2-vs-ours} also shows a comparison between the EDM2 model \citep{karras2023analyzing} guided with a separate unconditional module vs {\cfgnew}. We observe that both {\cfgnew} and CFG improve image quality, and the outputs of {\cfgnew} and CFG are almost identical. This empirical evidence agrees with our theoretical justification provided in \Cref{sec:icg-thoery}.
\mainCompFig

\paragraph{Quantitative results}
We now show that {\cfgnew} and CFG both result in similar performance metrics across several conditional models. As shown in \Cref{tab:main}, compared to CFG, {\cfgnew} achieves better or similar performance across all metrics.\footnote{For MDM \citep{tevet2022human}, recall is not available, and R-precision is reported similar to the paper.} In \Cref{tab:edm}, we also compare the effect of ICG on EDM \citep{karras2022elucidating} and EDM2 \citep{karras2023analyzing} models that were not trained with the CFG objective. The table shows that {\cfgnew} performs similarly to guiding the generations with a separate unconditional module.
\tabCFGvsOurs

\paragraph{Effect of removing the CFG objective from training} In this experiment, we demonstrate that the training component allocated to the CFG objective (i.e., label dropping) is unnecessary, and better results are obtained by training a purely conditional model and guiding the generations at inference with {\cfgnew}. Using a DiT model for class-conditional ImageNet generation, \Cref{fig:dit-training-plot} shows that the purely conditional model consistently outperforms standard training with label dropping ($p=0.1$) across all checkpoints. Consequently, training resources can be reallocated to the conditional part, leading to either faster convergence (by approximately $30\%$) or a better model (with around a $20\%$ reduction in FID) with the same number of training iterations.
\ICGTrainingPlot

\paragraph{Varying the Guidance Scale} Next, we demonstrate that by varying the guidance scale of {\cfgnew}, we can increase the quality of outputs in a manner similar to standard CFG. As shown in \Cref{fig:ditGuidance}, increasing the guidance scale improves precision but reduces recall. The FID plots also form a U-shaped curve, consistent with what we expect from standard CFG.
\ditGuidancePlot

\paragraph{Results for ControlNet}
We also show that {\cfgnew} can be used for improving the quality of image-conditioned models as well. We use ControlNet \citep{controlnet} as an example in this section since it is not trained with the CFG objective on the image condition input.  That is, it only applies CFG to the text component of the condition. Our results are given in \Cref{fig:controlnet}. We see that without any text prompt, {\cfgnew} significantly improves the quality of generations over the base sampling. 
\controlnetFig

\vspace{-0.3cm}
\subsection{Effectiveness of time-step guidance}
Lastly, we show the effectiveness of {\tsg} in improving generation quality without relying on any information about the conditioning signal. The qualitative results are given in \Cref{fig:tsg-main}. We can see that TSG increases the image quality of both conditional and unconditional sampling. \Cref{tab:tsg-main} also presents the quantitative evaluation of TSG for both conditional and unconditional generation. Similar to CFG, using TSG significantly improves FID by trading diversity with quality. Finally, \Cref{fig:dit-tsg-plot} shows how {\tsgsh} behaves as we increase the guidance scale. We observe that similar to CFG, {\tsgsh} also has a U-shaped plot for the FID as the guidance scale increases.
\tsgFig
\tabTSG
\ditTSGPlot

\paragraph{Combining TSG and ICG}
\combinationFig
\begin{wraptable}[7]{r}{0.4\textwidth}
    \begin{minipage}{\linewidth}
        \centering
        \caption{Compatibility of ICG and TSG}
    \label{table:icg-tsg}
    \maxsizebox{\linewidth}{!}{
        \begin{tabular}{ccccc}
          \toprule
          ICG & TSG & FID $\downarrow$ & Precision $\uparrow$ & Recall $\uparrow$ \\
          \midrule
           \xmark & \xmark & 15.49 & {0.64} & \textbf{0.74} \\
           \checkmark & \xmark & 6.47 & 0.77 & 0.69  \\
           \xmark & \checkmark & 9.55 & 0.70 & 0.71 \\
           \checkmark & \checkmark & \textbf{5.76} & \textbf{0.82} & 0.65 \\
          \bottomrule
        \end{tabular}
        }
    \end{minipage}
\end{wraptable}
We also demonstrate that ICG and TSG can be complementary to each other when combined at the proper scale. The quantitative results of this experiment are presented in \Cref{table:icg-tsg} with a visual example given in \Cref{fig:icg-and-tsg}. The table indicates that the combination of ICG and TSG outperforms each method in isolation in terms of FID, and all guided sampling algorithms significantly outperform the non-guided baseline.

\section{Ablation studies}\label{sec:ablation}
We next present the ablation studies on the effect of random conditioning in ICG and the hyperparameters in TSG. All ablations are conducted using the DiT-XL/2 model \citep{peeblesScalableDiffusionModels2022}.  

\paragraph{The choice of random condition in \cfgnew}
\begin{wraptable}[6]{r}{0.4\textwidth}
    \vspace{-0.45cm}
    \begin{minipage}{\linewidth}
        \caption{Ablation on the choice of independent condition in \cfgnew.}
        \label{table:icg-ablation}
    \maxsizebox{\linewidth}{!}{
        \begin{tabular}{lccc}
          \toprule
           {\cfgnew} method & FID $\downarrow$ & Precision $\uparrow$ & Recall $\uparrow$ \\
          \midrule
          Gaussian noise & \textbf{5.50} & \textbf{0.83} & \textbf{0.65} \\
          Random condition & 5.55 & \textbf{0.83} & \textbf{0.65}  \\
          \bottomrule
        \end{tabular}
        }
    \end{minipage}
\end{wraptable}
We first show that both Gaussian noise and random conditions can be used for estimating the unconditional part in {\cfgnew}. The quantitative results are given in \Cref{table:icg-ablation}. The table shows that both methods are viable options for simulating classifier-free guidance without training.

\paragraph{Impact of hyperparameters in \tsg} This ablation study explores the effect of hyperparameters in {\tsgsh}. The results are presented in \Cref{table:ablation-tsg}. We observe that as we introduce more perturbation into the time-step embedding of the model, in the form of higher noise scale $s$ (\Cref{table:noise-scale}), lower power $\alpha$ (\Cref{table:alpha}), or higher layer index (\Cref{table:max-layer}), precision improves while recall decreases. This suggests that the amount of perturbation should be balanced for a good trade-off between diversity and quality. We also empirically observed that adding too much noise to the time-step embedding hurts image quality. 
\ablationTSGTab

\section{Discussion and conclusion}\label{sec:conclusion}

In this paper, we revisited the core aspects of classifier-free guidance and showed that by replacing the conditional vector in a trained conditional diffusion model with an independent condition, we can efficiently estimate the score of the unconditional distribution. We then introduced independent condition guidance (ICG), a novel method that simulates the same behavior as CFG without the need to learn an unconditional model during training. Inspired by this, we also proposed time-step guidance (TSG) and demonstrated that the time-step information learned by the diffusion model can be leveraged to enhance the quality of generations, even for unconditional models. Our experiments indicate that ICG performs similarly to standard CFG and alleviates the need to consider the CFG objective during training. Thus, ICG streamlines the training of conditional models and improves training efficiency. Additionally, we verified that TSG also improves generation quality in a manner similar to CFG, without relying on any conditional information. As with CFG, challenges remain in accelerating the proposed methods to narrow the gap between the cost of guided and unguided sampling (i.e., eliminating the need to query the diffusion model twice per sampling step); we view this topic as a promising avenue for further research.

\clearpage

\subsubsection*{Ethics statement}\label{sec:impact-statemetn}
As generative modeling advances, the creation and spread of fabricated or inaccurate data become easier. Thus, while improvements in AI-generated content can boost productivity and creativity, it is crucial to consider the associated risks and ethical implications. For a more detailed discussion on the ethics and creativity in computer vision, we refer readers to \cite{rostamzadeh2021ethics}.

\subsubsection*{Reproducibility statement}\label{sec:impact-statemetn}
This work is based on the official implementations of the pretrained models referenced in the main text. The exact algorithms for {\cfgnew} and {\tsgsh} are provided in \Cref{algo:sampling-with-tfg,algo:sampling-with-tsg}, with corresponding pseudocode shown in \Cref{fig:imp-icg,fig:imp-tsg}. Additional implementation details, including the specific hyperparameters used to generate the results in this paper, are discussed in \Cref{sec:imp-detail}.

{
\bibliography{iclr2025_conference}

\begin{thebibliography}{50}
\providecommand{\natexlab}[1]{#1}
\providecommand{\url}[1]{\texttt{#1}}
\expandafter\ifx\csname urlstyle\endcsname\relax
  \providecommand{\doi}[1]{doi: #1}\else
  \providecommand{\doi}{doi: \begingroup \urlstyle{rm}\Url}\fi

\bibitem[Ahn et~al.(2024)Ahn, Cho, Min, Jang, Kim, Kim, Park, Jin, and Kim]{PAG}
Donghoon Ahn, Hyoungwon Cho, Jaewon Min, Wooseok Jang, Jungwoo Kim, Seonhwa Kim, Hyun~Hee Park, Kyong~Hwan Jin, and Seungryong Kim.
\newblock Self-rectifying diffusion sampling with perturbed-attention guidance.
\newblock \emph{CoRR}, abs/2403.17377, 2024.
\newblock \doi{10.48550/ARXIV.2403.17377}.
\newblock URL \url{https://doi.org/10.48550/arXiv.2403.17377}.

\bibitem[Balaji et~al.(2022)Balaji, Nah, Huang, Vahdat, Song, Kreis, Aittala, Aila, Laine, Catanzaro, Karras, and Liu]{balaji2022ediffi}
Yogesh Balaji, Seungjun Nah, Xun Huang, Arash Vahdat, Jiaming Song, Karsten Kreis, Miika Aittala, Timo Aila, Samuli Laine, Bryan Catanzaro, Tero Karras, and Ming{-}Yu Liu.
\newblock ediff-i: Text-to-image diffusion models with an ensemble of expert denoisers.
\newblock \emph{CoRR}, abs/2211.01324, 2022.
\newblock \doi{10.48550/arXiv.2211.01324}.
\newblock URL \url{https://doi.org/10.48550/arXiv.2211.01324}.

\bibitem[Bansal et~al.(2023)Bansal, Chu, Schwarzschild, Sengupta, Goldblum, Geiping, and Goldstein]{bansal2023universal}
Arpit Bansal, Hong-Min Chu, Avi Schwarzschild, Soumyadip Sengupta, Micah Goldblum, Jonas Geiping, and Tom Goldstein.
\newblock Universal guidance for diffusion models.
\newblock In \emph{Proceedings of the IEEE/CVF Conference on Computer Vision and Pattern Recognition}, pp.\  843--852, 2023.

\bibitem[Blattmann et~al.(2023{\natexlab{a}})Blattmann, Dockhorn, Kulal, Mendelevitch, Kilian, Lorenz, Levi, English, Voleti, Letts, Jampani, and Rombach]{stableVideoDiffusion}
Andreas Blattmann, Tim Dockhorn, Sumith Kulal, Daniel Mendelevitch, Maciej Kilian, Dominik Lorenz, Yam Levi, Zion English, Vikram Voleti, Adam Letts, Varun Jampani, and Robin Rombach.
\newblock Stable video diffusion: Scaling latent video diffusion models to large datasets.
\newblock \emph{CoRR}, abs/2311.15127, 2023{\natexlab{a}}.
\newblock \doi{10.48550/ARXIV.2311.15127}.
\newblock URL \url{https://doi.org/10.48550/arXiv.2311.15127}.

\bibitem[Blattmann et~al.(2023{\natexlab{b}})Blattmann, Rombach, Ling, Dockhorn, Kim, Fidler, and Kreis]{blattmann2023align}
Andreas Blattmann, Robin Rombach, Huan Ling, Tim Dockhorn, Seung~Wook Kim, Sanja Fidler, and Karsten Kreis.
\newblock Align your latents: High-resolution video synthesis with latent diffusion models.
\newblock In \emph{Proceedings of the IEEE/CVF Conference on Computer Vision and Pattern Recognition}, pp.\  22563--22575, 2023{\natexlab{b}}.

\bibitem[Brock et~al.(2019)Brock, Donahue, and Simonyan]{brockLargeScaleGAN2019}
Andrew Brock, Jeff Donahue, and Karen Simonyan.
\newblock Large scale {GAN} training for high fidelity natural image synthesis.
\newblock In \emph{7th International Conference on Learning Representations, {ICLR} 2019, New Orleans, LA, USA, May 6-9, 2019}. OpenReview.net, 2019.
\newblock URL \url{https://openreview.net/forum?id=B1xsqj09Fm}.

\bibitem[Chen et~al.(2021)Chen, Zhang, Zen, Weiss, Norouzi, and Chan]{WaveGrad}
Nanxin Chen, Yu~Zhang, Heiga Zen, Ron~J. Weiss, Mohammad Norouzi, and William Chan.
\newblock Wavegrad: Estimating gradients for waveform generation.
\newblock In \emph{9th International Conference on Learning Representations, {ICLR} 2021, Virtual Event, Austria, May 3-7, 2021}. OpenReview.net, 2021.
\newblock URL \url{https://openreview.net/forum?id=NsMLjcFaO8O}.

\bibitem[Chung et~al.(2022)Chung, Kim, Mccann, Klasky, and Ye]{chung2022diffusion}
Hyungjin Chung, Jeongsol Kim, Michael~T Mccann, Marc~L Klasky, and Jong~Chul Ye.
\newblock Diffusion posterior sampling for general noisy inverse problems.
\newblock \emph{arXiv preprint arXiv:2209.14687}, 2022.

\bibitem[Dhariwal \& Nichol(2021)Dhariwal and Nichol]{dhariwalDiffusionModelsBeat2021}
Prafulla Dhariwal and Alexander~Quinn Nichol.
\newblock Diffusion models beat gans on image synthesis.
\newblock In Marc'Aurelio Ranzato, Alina Beygelzimer, Yann~N. Dauphin, Percy Liang, and Jennifer~Wortman Vaughan (eds.), \emph{Advances in Neural Information Processing Systems 34: Annual Conference on Neural Information Processing Systems 2021, NeurIPS 2021, December 6-14, 2021, virtual}, pp.\  8780--8794, 2021.
\newblock URL \url{https://proceedings.neurips.cc/paper/2021/hash/49ad23d1ec9fa4bd8d77d02681df5cfa-Abstract.html}.

\bibitem[Gupta et~al.(2023)Gupta, Yu, Sohn, Gu, Hahn, Fei-Fei, Essa, Jiang, and Lezama]{gupta2023photorealistic}
Agrim Gupta, Lijun Yu, Kihyuk Sohn, Xiuye Gu, Meera Hahn, Li~Fei-Fei, Irfan Essa, Lu~Jiang, and Jos{\'e} Lezama.
\newblock Photorealistic video generation with diffusion models.
\newblock \emph{arXiv preprint arXiv:2312.06662}, 2023.

\bibitem[He et~al.(2023)He, Murata, Lai, Takida, Uesaka, Kim, Liao, Mitsufuji, Kolter, Salakhutdinov, et~al.]{he2023manifold}
Yutong He, Naoki Murata, Chieh-Hsin Lai, Yuhta Takida, Toshimitsu Uesaka, Dongjun Kim, Wei-Hsiang Liao, Yuki Mitsufuji, J~Zico Kolter, Ruslan Salakhutdinov, et~al.
\newblock Manifold preserving guided diffusion.
\newblock \emph{arXiv preprint arXiv:2311.16424}, 2023.

\bibitem[Heusel et~al.(2017)Heusel, Ramsauer, Unterthiner, Nessler, and Hochreiter]{fid}
Martin Heusel, Hubert Ramsauer, Thomas Unterthiner, Bernhard Nessler, and Sepp Hochreiter.
\newblock Gans trained by a two time-scale update rule converge to a local nash equilibrium.
\newblock In Isabelle Guyon, Ulrike von Luxburg, Samy Bengio, Hanna~M. Wallach, Rob Fergus, S.~V.~N. Vishwanathan, and Roman Garnett (eds.), \emph{Advances in Neural Information Processing Systems 30: Annual Conference on Neural Information Processing Systems 2017, December 4-9, 2017, Long Beach, CA, {USA}}, pp.\  6626--6637, 2017.
\newblock URL \url{https://proceedings.neurips.cc/paper/2017/hash/8a1d694707eb0fefe65871369074926d-Abstract.html}.

\bibitem[Ho \& Salimans(2022)Ho and Salimans]{hoClassifierFreeDiffusionGuidance2022}
Jonathan Ho and Tim Salimans.
\newblock Classifier-free diffusion guidance.
\newblock \emph{CoRR}, abs/2207.12598, 2022.
\newblock \doi{10.48550/arXiv.2207.12598}.
\newblock URL \url{https://doi.org/10.48550/arXiv.2207.12598}.

\bibitem[Ho et~al.(2020)Ho, Jain, and Abbeel]{hoDenoisingDiffusionProbabilistic2020}
Jonathan Ho, Ajay Jain, and Pieter Abbeel.
\newblock Denoising diffusion probabilistic models.
\newblock In Hugo Larochelle, Marc'Aurelio Ranzato, Raia Hadsell, Maria{-}Florina Balcan, and Hsuan{-}Tien Lin (eds.), \emph{Advances in Neural Information Processing Systems 33: Annual Conference on Neural Information Processing Systems 2020, NeurIPS 2020, December 6-12, 2020, virtual}, 2020.
\newblock URL \url{https://proceedings.neurips.cc/paper/2020/hash/4c5bcfec8584af0d967f1ab10179ca4b-Abstract.html}.

\bibitem[Hong et~al.(2022)Hong, Lee, Jang, and Kim]{selfAttentionGuidance}
Susung Hong, Gyuseong Lee, Wooseok Jang, and Seungryong Kim.
\newblock Improving sample quality of diffusion models using self-attention guidance.
\newblock \emph{CoRR}, abs/2210.00939, 2022.
\newblock \doi{10.48550/arXiv.2210.00939}.
\newblock URL \url{https://doi.org/10.48550/arXiv.2210.00939}.

\bibitem[Hoogeboom et~al.(2023)Hoogeboom, Heek, and Salimans]{hoogeboom2023simple}
Emiel Hoogeboom, Jonathan Heek, and Tim Salimans.
\newblock simple diffusion: End-to-end diffusion for high resolution images.
\newblock \emph{CoRR}, abs/2301.11093, 2023.
\newblock \doi{10.48550/arXiv.2301.11093}.
\newblock URL \url{https://doi.org/10.48550/arXiv.2301.11093}.

\bibitem[Huang et~al.(2023)Huang, Park, Wang, Denk, Ly, Chen, Zhang, Zhang, Yu, Frank, Engel, Le, Chan, and Han]{huang2023noise2music}
Qingqing Huang, Daniel~S. Park, Tao Wang, Timo~I. Denk, Andy Ly, Nanxin Chen, Zhengdong Zhang, Zhishuai Zhang, Jiahui Yu, Christian~Havn{\o} Frank, Jesse~H. Engel, Quoc~V. Le, William Chan, and Wei Han.
\newblock Noise2music: Text-conditioned music generation with diffusion models.
\newblock \emph{CoRR}, abs/2302.03917, 2023.
\newblock \doi{10.48550/arXiv.2302.03917}.
\newblock URL \url{https://doi.org/10.48550/arXiv.2302.03917}.

\bibitem[Karras et~al.(2022)Karras, Aittala, Aila, and Laine]{karras2022elucidating}
Tero Karras, Miika Aittala, Timo Aila, and Samuli Laine.
\newblock Elucidating the design space of diffusion-based generative models.
\newblock 2022.
\newblock URL \url{https://openreview.net/forum?id=k7FuTOWMOc7}.

\bibitem[Karras et~al.(2023)Karras, Aittala, Lehtinen, Hellsten, Aila, and Laine]{karras2023analyzing}
Tero Karras, Miika Aittala, Jaakko Lehtinen, Janne Hellsten, Timo Aila, and Samuli Laine.
\newblock Analyzing and improving the training dynamics of diffusion models, 2023.

\bibitem[Kong et~al.(2021)Kong, Ping, Huang, Zhao, and Catanzaro]{DiffWave}
Zhifeng Kong, Wei Ping, Jiaji Huang, Kexin Zhao, and Bryan Catanzaro.
\newblock Diffwave: {A} versatile diffusion model for audio synthesis.
\newblock In \emph{9th International Conference on Learning Representations, {ICLR} 2021, Virtual Event, Austria, May 3-7, 2021}. OpenReview.net, 2021.
\newblock URL \url{https://openreview.net/forum?id=a-xFK8Ymz5J}.

\bibitem[Kynk{\"{a}}{\"{a}}nniemi et~al.(2019)Kynk{\"{a}}{\"{a}}nniemi, Karras, Laine, Lehtinen, and Aila]{improvedPR}
Tuomas Kynk{\"{a}}{\"{a}}nniemi, Tero Karras, Samuli Laine, Jaakko Lehtinen, and Timo Aila.
\newblock Improved precision and recall metric for assessing generative models.
\newblock In Hanna~M. Wallach, Hugo Larochelle, Alina Beygelzimer, Florence d'Alch{\'{e}}{-}Buc, Emily~B. Fox, and Roman Garnett (eds.), \emph{Advances in Neural Information Processing Systems 32: Annual Conference on Neural Information Processing Systems 2019, NeurIPS 2019, December 8-14, 2019, Vancouver, BC, Canada}, pp.\  3929--3938, 2019.
\newblock URL \url{https://proceedings.neurips.cc/paper/2019/hash/0234c510bc6d908b28c70ff313743079-Abstract.html}.

\bibitem[Lin et~al.(2014)Lin, Maire, Belongie, Hays, Perona, Ramanan, Doll{\'{a}}r, and Zitnick]{lin2014microsoft}
Tsung{-}Yi Lin, Michael Maire, Serge~J. Belongie, James Hays, Pietro Perona, Deva Ramanan, Piotr Doll{\'{a}}r, and C.~Lawrence Zitnick.
\newblock Microsoft {COCO:} common objects in context.
\newblock In David~J. Fleet, Tom{\'{a}}s Pajdla, Bernt Schiele, and Tinne Tuytelaars (eds.), \emph{Computer Vision - {ECCV} 2014 - 13th European Conference, Zurich, Switzerland, September 6-12, 2014, Proceedings, Part {V}}, volume 8693 of \emph{Lecture Notes in Computer Science}, pp.\  740--755. Springer, 2014.
\newblock \doi{10.1007/978-3-319-10602-1\_48}.
\newblock URL \url{https://doi.org/10.1007/978-3-319-10602-1\_48}.

\bibitem[Liu et~al.(2023)Liu, Vahdat, Huang, Theodorou, Nie, and Anandkumar]{liu20232i2sb}
Guan{-}Horng Liu, Arash Vahdat, De{-}An Huang, Evangelos~A. Theodorou, Weili Nie, and Anima Anandkumar.
\newblock I\({}^{\mbox{2}}\)sb: Image-to-image schr{\"{o}}dinger bridge.
\newblock \emph{CoRR}, abs/2302.05872, 2023.
\newblock \doi{10.48550/arXiv.2302.05872}.
\newblock URL \url{https://doi.org/10.48550/arXiv.2302.05872}.

\bibitem[Liu et~al.(2022)Liu, Ren, Lin, and Zhao]{plms}
Luping Liu, Yi~Ren, Zhijie Lin, and Zhou Zhao.
\newblock Pseudo numerical methods for diffusion models on manifolds.
\newblock In \emph{The Tenth International Conference on Learning Representations, {ICLR} 2022, Virtual Event, April 25-29, 2022}. OpenReview.net, 2022.
\newblock URL \url{https://openreview.net/forum?id=PlKWVd2yBkY}.

\bibitem[Lu et~al.(2022{\natexlab{a}})Lu, Zhou, Bao, Chen, Li, and Zhu]{dpm_solver}
Cheng Lu, Yuhao Zhou, Fan Bao, Jianfei Chen, Chongxuan Li, and Jun Zhu.
\newblock Dpm-solver: {A} fast {ODE} solver for diffusion probabilistic model sampling in around 10 steps.
\newblock In \emph{NeurIPS}, 2022{\natexlab{a}}.
\newblock URL \url{http://papers.nips.cc/paper\_files/paper/2022/hash/260a14acce2a89dad36adc8eefe7c59e-Abstract-Conference.html}.

\bibitem[Lu et~al.(2022{\natexlab{b}})Lu, Zhou, Bao, Chen, Li, and Zhu]{lu2022dpm}
Cheng Lu, Yuhao Zhou, Fan Bao, Jianfei Chen, Chongxuan Li, and Jun Zhu.
\newblock Dpm-solver++: Fast solver for guided sampling of diffusion probabilistic models.
\newblock \emph{CoRR}, abs/2211.01095, 2022{\natexlab{b}}.
\newblock \doi{10.48550/arXiv.2211.01095}.
\newblock URL \url{https://doi.org/10.48550/arXiv.2211.01095}.

\bibitem[Murphy(2023)]{pml2Book}
Kevin~P. Murphy.
\newblock \emph{Probabilistic Machine Learning: Advanced Topics}.
\newblock MIT Press, 2023.
\newblock URL \url{http://probml.github.io/book2}.

\bibitem[Nichol \& Dhariwal(2021)Nichol and Dhariwal]{nichol2021improved}
Alexander~Quinn Nichol and Prafulla Dhariwal.
\newblock Improved denoising diffusion probabilistic models.
\newblock In Marina Meila and Tong Zhang (eds.), \emph{Proceedings of the 38th International Conference on Machine Learning, {ICML} 2021, 18-24 July 2021, Virtual Event}, volume 139 of \emph{Proceedings of Machine Learning Research}, pp.\  8162--8171. {PMLR}, 2021.
\newblock URL \url{http://proceedings.mlr.press/v139/nichol21a.html}.

\bibitem[Nichol et~al.(2022)Nichol, Dhariwal, Ramesh, Shyam, Mishkin, McGrew, Sutskever, and Chen]{glide}
Alexander~Quinn Nichol, Prafulla Dhariwal, Aditya Ramesh, Pranav Shyam, Pamela Mishkin, Bob McGrew, Ilya Sutskever, and Mark Chen.
\newblock {GLIDE:} towards photorealistic image generation and editing with text-guided diffusion models.
\newblock In Kamalika Chaudhuri, Stefanie Jegelka, Le~Song, Csaba Szepesv{\'{a}}ri, Gang Niu, and Sivan Sabato (eds.), \emph{International Conference on Machine Learning, {ICML} 2022, 17-23 July 2022, Baltimore, Maryland, {USA}}, volume 162 of \emph{Proceedings of Machine Learning Research}, pp.\  16784--16804. {PMLR}, 2022.
\newblock URL \url{https://proceedings.mlr.press/v162/nichol22a.html}.

\bibitem[Peebles \& Xie(2022)Peebles and Xie]{peeblesScalableDiffusionModels2022}
William Peebles and Saining Xie.
\newblock Scalable diffusion models with transformers.
\newblock \emph{CoRR}, abs/2212.09748, 2022.
\newblock \doi{10.48550/arXiv.2212.09748}.
\newblock URL \url{https://doi.org/10.48550/arXiv.2212.09748}.

\bibitem[Podell et~al.(2023)Podell, English, Lacey, Blattmann, Dockhorn, M{\"{u}}ller, Penna, and Rombach]{sdxl}
Dustin Podell, Zion English, Kyle Lacey, Andreas Blattmann, Tim Dockhorn, Jonas M{\"{u}}ller, Joe Penna, and Robin Rombach.
\newblock {SDXL:} improving latent diffusion models for high-resolution image synthesis.
\newblock \emph{CoRR}, abs/2307.01952, 2023.
\newblock \doi{10.48550/ARXIV.2307.01952}.
\newblock URL \url{https://doi.org/10.48550/arXiv.2307.01952}.

\bibitem[Poole et~al.(2023)Poole, Jain, Barron, and Mildenhall]{DreamFusion}
Ben Poole, Ajay Jain, Jonathan~T. Barron, and Ben Mildenhall.
\newblock Dreamfusion: Text-to-3d using 2d diffusion.
\newblock In \emph{The Eleventh International Conference on Learning Representations, {ICLR} 2023, Kigali, Rwanda, May 1-5, 2023}. OpenReview.net, 2023.
\newblock URL \url{https://openreview.net/pdf?id=FjNys5c7VyY}.

\bibitem[Ramesh et~al.(2022)Ramesh, Dhariwal, Nichol, Chu, and Chen]{dalle2}
Aditya Ramesh, Prafulla Dhariwal, Alex Nichol, Casey Chu, and Mark Chen.
\newblock Hierarchical text-conditional image generation with {CLIP} latents.
\newblock \emph{CoRR}, abs/2204.06125, 2022.
\newblock \doi{10.48550/arXiv.2204.06125}.
\newblock URL \url{https://doi.org/10.48550/arXiv.2204.06125}.

\bibitem[Rombach et~al.(2022)Rombach, Blattmann, Lorenz, Esser, and Ommer]{rombachHighResolutionImageSynthesis2022}
Robin Rombach, Andreas Blattmann, Dominik Lorenz, Patrick Esser, and Bj{\"{o}}rn Ommer.
\newblock High-resolution image synthesis with latent diffusion models.
\newblock In \emph{{IEEE/CVF} Conference on Computer Vision and Pattern Recognition, {CVPR} 2022, New Orleans, LA, USA, June 18-24, 2022}, pp.\  10674--10685. {IEEE}, 2022.
\newblock \doi{10.1109/CVPR52688.2022.01042}.
\newblock URL \url{https://doi.org/10.1109/CVPR52688.2022.01042}.

\bibitem[Rostamzadeh et~al.(2021)Rostamzadeh, Denton, and Petrini]{rostamzadeh2021ethics}
Negar Rostamzadeh, Emily Denton, and Linda Petrini.
\newblock Ethics and creativity in computer vision.
\newblock \emph{CoRR}, abs/2112.03111, 2021.
\newblock URL \url{https://arxiv.org/abs/2112.03111}.

\bibitem[Sadat et~al.(2024)Sadat, Buhmann, Bradley, Hilliges, and Weber]{sadat2024cads}
Seyedmorteza Sadat, Jakob Buhmann, Derek Bradley, Otmar Hilliges, and Romann~M. Weber.
\newblock {CADS}: Unleashing the diversity of diffusion models through condition-annealed sampling.
\newblock In \emph{The Twelfth International Conference on Learning Representations}, 2024.
\newblock URL \url{https://openreview.net/forum?id=zMoNrajk2X}.

\bibitem[Saharia et~al.(2022{\natexlab{a}})Saharia, Chan, Chang, Lee, Ho, Salimans, Fleet, and Norouzi]{saharia2022palette}
Chitwan Saharia, William Chan, Huiwen Chang, Chris~A. Lee, Jonathan Ho, Tim Salimans, David~J. Fleet, and Mohammad Norouzi.
\newblock Palette: Image-to-image diffusion models.
\newblock In Munkhtsetseg Nandigjav, Niloy~J. Mitra, and Aaron Hertzmann (eds.), \emph{{SIGGRAPH} '22: Special Interest Group on Computer Graphics and Interactive Techniques Conference, Vancouver, BC, Canada, August 7 - 11, 2022}, pp.\  15:1--15:10. {ACM}, 2022{\natexlab{a}}.
\newblock \doi{10.1145/3528233.3530757}.
\newblock URL \url{https://doi.org/10.1145/3528233.3530757}.

\bibitem[Saharia et~al.(2022{\natexlab{b}})Saharia, Chan, Saxena, Li, Whang, Denton, Ghasemipour, Lopes, Ayan, Salimans, Ho, Fleet, and Norouzi]{saharia2022photorealistic}
Chitwan Saharia, William Chan, Saurabh Saxena, Lala Li, Jay Whang, Emily~L. Denton, Seyed Kamyar~Seyed Ghasemipour, Raphael~Gontijo Lopes, Burcu~Karagol Ayan, Tim Salimans, Jonathan Ho, David~J. Fleet, and Mohammad Norouzi.
\newblock Photorealistic text-to-image diffusion models with deep language understanding.
\newblock 2022{\natexlab{b}}.
\newblock URL \url{http://papers.nips.cc/paper\_files/paper/2022/hash/ec795aeadae0b7d230fa35cbaf04c041-Abstract-Conference.html}.

\bibitem[Salimans \& Ho(2022)Salimans and Ho]{salimansProgressiveDistillationFast2022}
Tim Salimans and Jonathan Ho.
\newblock Progressive distillation for fast sampling of diffusion models.
\newblock In \emph{The Tenth International Conference on Learning Representations, {ICLR} 2022, Virtual Event, April 25-29, 2022}. OpenReview.net, 2022.
\newblock URL \url{https://openreview.net/forum?id=TIdIXIpzhoI}.

\bibitem[Sohl{-}Dickstein et~al.(2015)Sohl{-}Dickstein, Weiss, Maheswaranathan, and Ganguli]{sohl2015deep}
Jascha Sohl{-}Dickstein, Eric~A. Weiss, Niru Maheswaranathan, and Surya Ganguli.
\newblock Deep unsupervised learning using nonequilibrium thermodynamics.
\newblock 37:\penalty0 2256--2265, 2015.
\newblock URL \url{http://proceedings.mlr.press/v37/sohl-dickstein15.html}.

\bibitem[Song et~al.(2021{\natexlab{a}})Song, Meng, and Ermon]{songDenoisingDiffusionImplicit2022}
Jiaming Song, Chenlin Meng, and Stefano Ermon.
\newblock Denoising diffusion implicit models.
\newblock In \emph{9th International Conference on Learning Representations, {ICLR} 2021, Virtual Event, Austria, May 3-7, 2021}. OpenReview.net, 2021{\natexlab{a}}.
\newblock URL \url{https://openreview.net/forum?id=St1giarCHLP}.

\bibitem[Song et~al.(2023)Song, Zhang, Yin, Mardani, Liu, Kautz, Chen, and Vahdat]{song2023loss}
Jiaming Song, Qinsheng Zhang, Hongxu Yin, Morteza Mardani, Ming-Yu Liu, Jan Kautz, Yongxin Chen, and Arash Vahdat.
\newblock Loss-guided diffusion models for plug-and-play controllable generation.
\newblock In \emph{International Conference on Machine Learning}, pp.\  32483--32498. PMLR, 2023.

\bibitem[Song \& Ermon(2019)Song and Ermon]{DBLP:conf/nips/SongE19}
Yang Song and Stefano Ermon.
\newblock Generative modeling by estimating gradients of the data distribution.
\newblock In Hanna~M. Wallach, Hugo Larochelle, Alina Beygelzimer, Florence d'Alch{\'{e}}{-}Buc, Emily~B. Fox, and Roman Garnett (eds.), \emph{Advances in Neural Information Processing Systems 32: Annual Conference on Neural Information Processing Systems 2019, NeurIPS 2019, December 8-14, 2019, Vancouver, BC, Canada}, pp.\  11895--11907, 2019.
\newblock URL \url{https://proceedings.neurips.cc/paper/2019/hash/3001ef257407d5a371a96dcd947c7d93-Abstract.html}.

\bibitem[Song et~al.(2021{\natexlab{b}})Song, Sohl{-}Dickstein, Kingma, Kumar, Ermon, and Poole]{score-sde}
Yang Song, Jascha Sohl{-}Dickstein, Diederik~P. Kingma, Abhishek Kumar, Stefano Ermon, and Ben Poole.
\newblock Score-based generative modeling through stochastic differential equations.
\newblock In \emph{9th International Conference on Learning Representations, {ICLR} 2021, Virtual Event, Austria, May 3-7, 2021}. OpenReview.net, 2021{\natexlab{b}}.
\newblock URL \url{https://openreview.net/forum?id=PxTIG12RRHS}.

\bibitem[Stein et~al.(2024)Stein, Cresswell, Hosseinzadeh, Sui, Ross, Villecroze, Liu, Caterini, Taylor, and Loaiza-Ganem]{stein2024exposing}
George Stein, Jesse Cresswell, Rasa Hosseinzadeh, Yi~Sui, Brendan Ross, Valentin Villecroze, Zhaoyan Liu, Anthony~L Caterini, Eric Taylor, and Gabriel Loaiza-Ganem.
\newblock Exposing flaws of generative model evaluation metrics and their unfair treatment of diffusion models.
\newblock \emph{Advances in Neural Information Processing Systems}, 36, 2024.

\bibitem[Tevet et~al.(2023)Tevet, Raab, Gordon, Shafir, Cohen{-}Or, and Bermano]{tevet2022human}
Guy Tevet, Sigal Raab, Brian Gordon, Yonatan Shafir, Daniel Cohen{-}Or, and Amit~Haim Bermano.
\newblock Human motion diffusion model.
\newblock 2023.
\newblock URL \url{https://openreview.net/pdf?id=SJ1kSyO2jwu}.

\bibitem[Tseng et~al.(2023)Tseng, Castellon, and Liu]{Tseng_2023_CVPR}
Jonathan Tseng, Rodrigo Castellon, and C.~Karen Liu.
\newblock {EDGE:} editable dance generation from music.
\newblock In \emph{{IEEE/CVF} Conference on Computer Vision and Pattern Recognition, {CVPR} 2023, Vancouver, BC, Canada, June 17-24, 2023}, pp.\  448--458. {IEEE}, 2023.
\newblock \doi{10.1109/CVPR52729.2023.00051}.
\newblock URL \url{https://doi.org/10.1109/CVPR52729.2023.00051}.

\bibitem[Yu et~al.(2022)Yu, Xu, Koh, Luong, Baid, Wang, Vasudevan, Ku, Yang, Ayan, Hutchinson, Han, Parekh, Li, Zhang, Baldridge, and Wu]{yu2022scaling}
Jiahui Yu, Yuanzhong Xu, Jing~Yu Koh, Thang Luong, Gunjan Baid, Zirui Wang, Vijay Vasudevan, Alexander Ku, Yinfei Yang, Burcu~Karagol Ayan, Ben Hutchinson, Wei Han, Zarana Parekh, Xin Li, Han Zhang, Jason Baldridge, and Yonghui Wu.
\newblock Scaling autoregressive models for content-rich text-to-image generation.
\newblock \emph{Trans. Mach. Learn. Res.}, 2022, 2022.
\newblock URL \url{https://openreview.net/forum?id=AFDcYJKhND}.

\bibitem[Yu et~al.(2023)Yu, Wang, Zhao, Ghanem, and Zhang]{yu2023freedom}
Jiwen Yu, Yinhuai Wang, Chen Zhao, Bernard Ghanem, and Jian Zhang.
\newblock Freedom: Training-free energy-guided conditional diffusion model.
\newblock In \emph{Proceedings of the IEEE/CVF International Conference on Computer Vision}, pp.\  23174--23184, 2023.

\bibitem[Zhang \& Agrawala(2023)Zhang and Agrawala]{controlnet}
Lvmin Zhang and Maneesh Agrawala.
\newblock Adding conditional control to text-to-image diffusion models.
\newblock \emph{CoRR}, abs/2302.05543, 2023.
\newblock \doi{10.48550/ARXIV.2302.05543}.
\newblock URL \url{https://doi.org/10.48550/arXiv.2302.05543}.

\end{thebibliography}
\bibliographystyle{iclr2025_conference}
}

\appendix
\clearpage
\section{Analysis of the error in estimating the unconditional score} \label{app:bounding}
In this section, we provide another perspective on why training the conditional score is sufficient for computing the unconditional score. For ease of exposition, assume that the model’s objective is to directly learn the conditional score, $s_{\theta}(\vz,\vy) \approx \grad_\vz \log p(\vz \cond \vy)$, from paired data jointly drawn from $p(\vz,\vy)$. This can be achieved by directly using denoising score matching \citep{score-sde} or by training a denoiser $D_{\bmtheta}(\vz, \vy)$  and converting its outputs to the corresponding score function via \Cref{eq:score}. The question then boils down to whether the unconditional score, $\grad_\vz \log p(\vz)$, can be recovered from the conditional score. 

\newtheorem{theorem}{Theorem}[section]
\newtheorem{corollary}{Corollary}[theorem]
\newtheorem{lemma}[theorem]{Lemma}

\begin{lemma}
The unconditional score, $\grad_\vz \log p(\vz)$, is equal to the conditional expectation of the conditional score, $\grad_\vz \log p(\vz | \vy)$.
\end{lemma}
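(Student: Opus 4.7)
The plan is a direct computation using Bayes' rule and the log-derivative identity $\grad_\vz p = p\,\grad_\vz \log p$. First I would start from the marginalization identity $p(\vz) = \int p(\vz,\vy)\,d\vy$ and differentiate under the integral sign, obtaining $\grad_\vz p(\vz) = \int \grad_\vz p(\vz,\vy)\,d\vy$. The regularity to justify this is standard for the smooth, Gaussian-convolved densities $p_t$ that appear in diffusion models, so I would simply invoke it rather than prove it from scratch.

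Next I would rewrite the integrand using $p(\vz,\vy) = p(\vz \cond \vy)\,p(\vy)$ together with the log-derivative trick: since $\grad_\vz \log p(\vy) = 0$, we have $\grad_\vz p(\vz,\vy) = p(\vz,\vy)\,\grad_\vz \log p(\vz \cond \vy)$. Dividing both sides of the previous display by $p(\vz)$ gives
\begin{equation*}
\grad_\vz \log p(\vz) \;=\; \int \frac{p(\vz,\vy)}{p(\vz)}\,\grad_\vz \log p(\vz \cond \vy)\,d\vy \;=\; \int p(\vy \cond \vz)\,\grad_\vz \log p(\vz \cond \vy)\,d\vy,
\end{equation*}
where the last equality is Bayes' rule. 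This is precisely $\ex{\grad_\vz \log p(\vz \cond \vy)}[\vy \sim p(\vy \cond \vz)]$, i.e.\ the conditional expectation of the conditional score with respect to the posterior of $\vy$ given $\vz$.

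There is really no hard step here; the whole proof is three lines once the derivative-under-the-integral is granted. The only conceptual point worth emphasizing in the write-up is that the expectation is taken against the \emph{posterior} $p(\vy \cond \vz)$ rather than the marginal $p(\vy)$. This clarification matters for the paper's subsequent argument: when $\vz$ is drawn jointly with $\vy$, replacing $\vy$ by a sample from $p(\vy)$ (or by Gaussian noise) is only an approximation to the true posterior integral, and the quality of the ICG estimate depends on how well the learned denoiser extrapolates across this mismatch. I would therefore close the proof by flagging that the subsequent error analysis in the appendix is what quantifies the gap between this exact identity and the practical ICG estimator $D_{\bmtheta}(\vz,\hat{\vy})$ for an independent draw $\hat{\vy}$.
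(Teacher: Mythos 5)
Your proof is correct and is essentially identical to the paper's: both marginalize $p(\vz)$ over $\vy$, differentiate under the integral, convert $\grad_\vz p(\vz \cond \vy)$ to $p(\vz \cond \vy)\,\grad_\vz \log p(\vz \cond \vy)$, and recognize the resulting weight as the posterior $p(\vy \cond \vz)$. Your closing emphasis that the expectation is against the posterior rather than the marginal $p(\vy)$ is exactly the point the paper also makes immediately after its proof, so there is nothing to add.
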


\begin{proof}
By direct calculation, we have that
\begin{align}
    \grad_\vz \log p(\vz) &= \frac{\grad_\vz p(\vz)}{p(\vz)} \\ 
    &= \frac{\grad_\vz \int p(\vy) p(\vz \cond \vy) \mathrm{d}\vy}{p(\vz)} \\ 
    &= \int \frac{p(\vy)}{p(\vz)} \grad_\vz p(\vz \cond \vy) \mathrm{d}\vy \\ 
    &= \int \frac{p(\vy) p(\vz \cond \vy)}{p(\vz) p(\vz \cond \vy)} \grad_\vz p(\vz \cond \vy) \mathrm{d}\vy \\ 
    &= \int p(\vy|\vz) \grad_\vz \log p(\vz \cond \vy) \mathrm{d}\vy.
\end{align}
The last term is the conditional expectation of the conditional score, proving the result.
\end{proof}

This shows that under sufficient data and optimization, the unconditional score at each time step is theoretically available from the conditional score through (conditional) marginalization. It is also therefore the case that a condition drawn from $p(\vy|\vz)$ will produce an unbiased estimate of the unconditional score.

We propose drawing conditions randomly from a distribution $q(\vy)$, which is not necessarily equal to $p(\vy|\vz)$. Below we characterize the approximation error incurred by this choice.
\begin{theorem}
When the expectation is taken over the distribution $q(\vy)$, we have
$$\mathbb{E}_{q(\vy)} \left[ \nabla_\vz \log p(\vz|\vy) \right] = \nabla_\vz \log p(\vz) - \nabla_\vz \mathcal{D}_{\text{KL}}(q(\vy)\|p(\vy|\vz)).$$ That is, the approximation error is bounded by the gradient of the KL divergence between $q(\vy)$ and $p(\vy|\vz)$.
\end{theorem}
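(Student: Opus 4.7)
The plan is to decompose the conditional score via Bayes' rule, take the expectation under $q(\vy)$, and then recognize the residual term as the gradient of a KL divergence. Concretely, I would start from the identity
\begin{equation*}
\log p(\vz \cond \vy) = \log p(\vy \cond \vz) + \log p(\vz) - \log p(\vy),
\end{equation*}
and differentiate both sides with respect to $\vz$. Since $\log p(\vy)$ is independent of $\vz$, this gives $\nabla_\vz \log p(\vz \cond \vy) = \nabla_\vz \log p(\vz) + \nabla_\vz \log p(\vy \cond \vz)$. This is the same manipulation that underlies classifier guidance and is already used in \Cref{sec:icg-thoery}, so it is safe to invoke.

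Next I would take the expectation of both sides over $\vy \sim q(\vy)$. Because $\nabla_\vz \log p(\vz)$ does not depend on $\vy$, it passes through the expectation unchanged, yielding
\begin{equation*}
\mathbb{E}_{q(\vy)}[\nabla_\vz \log p(\vz \cond \vy)] = \nabla_\vz \log p(\vz) + \mathbb{E}_{q(\vy)}[\nabla_\vz \log p(\vy \cond \vz)].
\end{equation*}
The remaining task is to rewrite the second term as $-\nabla_\vz \dkl{q(\vy)}{p(\vy \cond \vz)}$. Expanding the KL divergence gives $\dkl{q(\vy)}{p(\vy \cond \vz)} = \int q(\vy) \log q(\vy)\,\mathrm{d}\vy - \int q(\vy) \log p(\vy \cond \vz)\,\mathrm{d}\vy$. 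The first integral is constant in $\vz$, so under standard regularity assumptions that justify swapping the gradient and the integral over $\vy$, we obtain $\nabla_\vz \dkl{q(\vy)}{p(\vy \cond \vz)} = -\mathbb{E}_{q(\vy)}[\nabla_\vz \log p(\vy \cond \vz)]$. Substituting gives the claimed identity.

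The derivation is short and essentially algebraic once Bayes' rule is applied; there is no serious technical obstacle. The only mild subtlety is justifying the interchange of $\nabla_\vz$ and the integral over $\vy$, which holds under mild smoothness and integrability conditions on $q$ and $p(\vy \cond \vz)$ (e.g., dominated convergence applied to the difference quotient). A minor interpretive point worth flagging is that the statement phrases the residual as a ``bound'' on the approximation error, whereas the derivation actually yields an exact equality; I would state the exact identity and then remark that the norm of this gradient quantifies the error, recovering the original score exactly when $q(\vy) = p(\vy \cond \vz)$.
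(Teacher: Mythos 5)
Your proof is correct and follows essentially the same route as the paper's: decompose $\nabla_\vz \log p(\vz\cond\vy)$ into $\nabla_\vz \log p(\vz) + \nabla_\vz \log p(\vy\cond\vz)$ (the paper passes through the joint $\log p(\vz,\vy)$ but lands on the identical decomposition), take the expectation over $q(\vy)$, and identify the residual with $-\nabla_\vz \mathcal{D}_{\text{KL}}(q(\vy)\|p(\vy\cond\vz))$ by differentiating the KL under the integral sign. Your closing remarks---that the regularity needed to swap gradient and integral should be stated, and that the result is an exact identity rather than a bound despite the theorem's phrasing---are both accurate and, if anything, slightly more careful than the paper's own presentation.
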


\begin{proof}
We can write $$\nabla_\vz \log p(\vz|\vy) = \nabla_\vz \log p(\vz,\vy) - \nabla_\vz \log p(\vy) = \nabla_\vz \log p(\vz,\vy),$$ which in turn can be written as $$\nabla_\vz \log p(\vz,\vy) = \nabla_\vz \log p(\vz) + \nabla_\vz \log p(\vy|\vz).$$ Taking the expectation over $q(\vy)$, we have
\begin{align*}
\mathbb{E}_{q(\vy)} \left[ \nabla_\vz \log p(\vz,\vy) \right] &= \nabla_\vz \log p(\vz) + \mathbb{E}_{q(\vy)} \left[ \nabla_\vz \log p(\vy|\vz) \right] \\
&= \nabla_\vz \log p(\vz) + \text{Error}.
\end{align*}

Now consider the KL divergence between $q(\vy)$ and $p(\vy|\vz)$:
$$
\mathcal{D}_{\text{KL}}(q(\vy)\|p(\vy|\vz)) = \int q(\vy) \left[ \log q(\vy) - \log p(\vy|\vz) \right].
$$
Taking the gradient with respect to $\vz$, we have 
$$
\nabla_\vz \mathcal{D}_{\text{KL}}(q(\vy)\|p(\vy|\vz)) = -\int q(\vy) \nabla_\vz \log p(\vy|\vz) \mathrm{d}\vy = -\mathbb{E}_{q(\vy)} \left[ \nabla_\vz \log p(\vy|\vz) \right],
$$
which is the negative of the error term above, proving the result.   
\end{proof}

\section{Analysis of the ICG estimator for mixtures of Gaussians}
We now analyze the ICG estimator using a mixture of Gaussian distributions so that the ground truth scores are analytically available. Assume we have the data distribution given by $\pdata(\vx) = 0.5 \normal{\vx;\bmmu_0}{\mI} + 0.5 \normal{\vx;\bmmu_1}{\mI}$. This gives us $\pdata(\vx \cond y = 0) \coloneqq p_0(\vx) = \normal{\vx;\bmmu_0}{\mI}$ and $\pdata(\vx \cond y = 1) \coloneqq p_1(\vx) = \normal{\vx;\bmmu_1}{\mI}$. Given the forward process $\vz_t = \vx + \sigma_t \bmepsilon$, we have $p_0(\vz_t) = \normal{\vz_t;\bmmu_0}{(1 + \sigma_t^2)\mI}$ and $p_1(\vz_t) = \normal{\vz_t;\bmmu_1}{(1 + \sigma_t^2)\mI}$. 
Let $s(\vz_t, y)$ be the conditional score function $\grad_{\vz_t} \log p_t(\vz_t \cond y)$ In this case, the conditional score functions are given by
\begin{equation}
    s(\vz_t, y=0) = \frac{\bmmu_0 - \vx}{1 + \sigma_t^2}, \quad \mathrm{and} \quad s(\vz_t, y=1) = \frac{\bmmu_1 - \vx}{1 + \sigma_t^2}.
\end{equation}
The unconditional score function is equal to
\begin{align}
    s(\vz_t) &= \frac{0.5 \grad_{\vz_t} p_0(\vz_t) + 0.5 \grad_{\vz_t} p_1(\vz_t)}{0.5 p_0(\vz_t) + 0.5 p_1(\vz_t)} \\
    &= \frac{ \grad_{\vz_t} \normal{\vx;\bmmu_0}{(1 + \sigma_t^2)\mI}  + \grad_{\vz_t} \normal{\vx;\bmmu_1}{(1 + \sigma_t^2)\mI}}{p_0(\vz_t) + p_1(\vz_t)} \\
    &= \frac{p_0(\vz_t)}{p_0(\vz_t) + p_1(\vz_t)} \frac{\bmmu_0 - \vx}{{1 + \sigma_t^2}} +  \frac{p_1(\vz_t)}{p_0(\vz_t) + p_1(\vz_t)} \frac{\bmmu_1 - \vx}{1 + \sigma_t^2}
\end{align}
The CFG update direction is therefore given by
\begin{align}
     s(\vz_t, y=0) - s(\vz_t) &=  \frac{\bmmu_0 - \vx}{1 + \sigma_t^2} -\frac{p_0(\vz_t)}{p_0(\vz_t) + p_1(\vz_t)} \frac{\bmmu_0 - \vx}{{1 + \sigma_t^2}} -  \frac{p_1(\vz_t)}{p_0(\vz_t) + p_1(\vz_t)} \frac{\bmmu_1 - \vx}{1 + \sigma_t^2}\\
    &= \frac{p_1(\vz_t)}{p_0(\vz_t) + p_1(\vz_t)} \frac{\bmmu_0 - \vx}{1 + \sigma_t^2} - \frac{p_1(\vz_t)}{p_0(\vz_t) + p_1(\vz_t)} \frac{\bmmu_1 - \vx}{1 + \sigma_t^2} \\
    &= \frac{p_1(\vz_t)}{p_0(\vz_t) + p_1(\vz_t)} \frac{\bmmu_0 - \bmmu_1}{1 + \sigma_t^2}
\end{align}
For ICG, assume that the random condition $\hat{y}$ is sampled from $q(\hat{y})$. Therefore, the update rule given by ICG is equal to 
\begin{equation}
    s(\vz_t, y=0) - s(\vz_t, \hat{y}) = 
    \begin{cases}
        0 & \hat{y} = 0,\\
        \frac{\bmmu_0 - \bmmu_1}{1 + \sigma_t^2} & \hat{y} = 1.
    \end{cases}
\end{equation}
This means that on expectation, the update direction of ICG is equal to 
\begin{equation}
    \ex{s(\vz_t, y=0) - s(\vz_t, \hat{y})}[q(\hat{y})] = q(\hat{y}=1)\frac{\bmmu_0 - \bmmu_1}{1 + \sigma_t^2}.
\end{equation}
This example shows that in theory, we can set $q(\hat{y}=1) = \frac{p_1(\vz_t)}{p_0(\vz_t) + p_1(\vz_t)}$, and the ICG estimator becomes an unbiased estimate of the CFG update direction. In practice and for real-world models, we noted that it is sufficient to set $q(\hat{y})$ to either a uniform distribution over the space of the conditions, or equal to a Gaussian distribution with suitable standard deviation. 

\section{Compatibility of {\cfgnew} with CADS}\label{sec:cads}

\begin{wraptable}[6]{r}{0.4\textwidth}
    \vspace{-0.45cm}
    \begin{minipage}{\linewidth}
        \caption{Effectiveness of CADS on \cfgnew.}
    \label{table:cads-icg}
    \maxsizebox{\linewidth}{!}{
        \begin{tabular}{lccc}
          \toprule
          Guidance & FID $\downarrow$ & Precision $\uparrow$ & Recall $\uparrow$ \\
          \midrule
          \cfgnew & 20.56 & \textbf{0.89} & 0.32 \\
          +CADS & \textbf{8.83} & 0.78 & \textbf{0.61} \\
          \bottomrule
        \end{tabular}
        }
    \end{minipage}
\end{wraptable}
We show that {\cfgnew} is compatible with CADS \citep{sadat2024cads}, and CADS can be used on top of {\cfgnew} to increase the diversity of generations. An example of applying CADS to {\cfgnew} is shown in \Cref{fig:icg-cads}, and the quantitative results are given in \Cref{table:cads-icg} for the DiT-XL/2 model. As {\cfgnew} behaves similarly to the standard CFG, applying CADS increases diversity with minimal drop in quality.

\section{Intuition behind TSG}
This section provides more intuition on time-step guidance. 
We demonstrate that if we perturb the time step with a positive or negative constant (using $t+\delta$ or $t-\delta$) to guide the sampling, it results in insufficient or excessive noise removal in final generations. As shown in \Cref{fig:tsg-intuition}, using lower time steps causes the model to perform excessive noise removal (soft outputs), while using higher time steps forces the model to perform insufficient noise removal (noisy images). TSG uses both directions to prevent the outputs from moving toward these undesirable paths, thereby increasing the quality of the generations.
\figAppendix

\section{Comparison between TSG and other methods}
In this section, we compare the performance of TSG with other guidance mechanisms that improve the generation quality of diffusion models without relying on any condition. We use self-attention guidance (SAG) \citep{selfAttentionGuidance}, and perturbed-attention guidance (PAG) \citep{PAG} as two representative methods in this category. The results are given in \Cref{table:sag-pag}. We use Stable Diffusion 2.1 for this parameters and the default hyperparameters for all methods. We note that TSG is more effective in improving the quality of generations, providing better FID compared to SAG and PAG.
\SagPagTable

\section{Increasing the number of sampling steps}
Since {\tsgsh} increases the number of sampling steps (similar to CFG), a natural question is whether the same behavior can be achieved by simply increasing the number of sampling steps in the unguided sampling baseline. Our results in \Cref{table:steps} indicate that this approach performs significantly worse than {\tsgsh}, suggesting that, like CFG, {\tsgsh} alters the sampling trajectories toward higher-quality generations. Note that TSG achieves a significantly better FID compared to both unguided sampling baselines. This aligns with findings from the CFG literature, where guided sampling outperforms unguided baselines even with many sampling steps (e.g., 1000 steps) \citep{dhariwalDiffusionModelsBeat2021,hoClassifierFreeDiffusionGuidance2022}.
\samplingStepsTable

\section{Implementation details}\label{sec:imp-detail}
The sampling details for {\cfgnew} and {\tsgsh} are provided in \Cref{algo:sampling-with-tfg,algo:sampling-with-tsg}. Both algorithms are straightforward to implement and require minimal code changes compared to the base sampling. The pseudocode for implementing ICG and TSG is also included in \Cref{fig:imp-icg,fig:imp-tsg}. Additionally, the hyperparameters used in our experiments are listed in \Cref{tab:icg-hp,tab:tsg-hp}. The CADS experiment was conducted with a linear schedule using $\tau_1=0.5$, $\tau_2=0.9$, and $s_{\textnormal{CADS}}=0.15$. Lastly, please note that we did not perform an exhaustive grid search on the parameters of TSG, and better configurations are likely to exist for each model.

For the TSG noise schedule, we experimented with a constant and a power schedule, as shown in \Cref{fig:imp-tsg}, and found that both work similarly. We recommend using the power schedule as it offers more flexibility over the scale of the noise at each $t$. The constant schedule is technically a special case of the power schedule, where the exponent is zero. We also found it useful to apply TSG only at intervals during the sampling, i.e., for \( t \in [T_{\textnormal{min}}, T_{\textnormal{max}}] \), where \( T_{\textnormal{min}} \) and \( T_{\textnormal{max}} \) are hyperparameters. Also, when limiting TSG to only a portion of layers in the diffusion model, we used the first $N$ layers of transformer-based architectures and the first $N$ layers of the encoder and decoder in UNet-based architectures. We chose to scale the amount of noise $s$ based on the standard deviation of the time-step embedding (see \Cref{fig:imp-icg,fig:imp-tsg}) for more fine-grained control over the scale.

We primarily use the ADM evaluation script \citep{dhariwalDiffusionModelsBeat2021} for computing FID, precision, and recall to ensure a fair comparison across experiments. For class-conditional models, the FID is computed between $\num{10000}$ (for DiT-XL/2) or $\num{50000}$ (For EDM and EDM2) generated images and the full training dataset. For text-to-image models, we use the evaluation subset of MS COCO 2017 \citep{lin2014microsoft} as the ground truth for captions and images.

\section{More visual results}
This section presents additional visual results for our guidance methods. More results on ICG are provided in \Cref{fig:icg-appendix}, while additional results for TSG are shown in \Cref{fig:tsg-appendix,fig:tsg-appendix2}. Consistent with the main results of the paper, ICG produces similar outcomes to CFG, and TSG consistently enhances the quality compared to the baseline. \Cref{fig:tsg-sdxl} provides examples of the effectiveness of TSG based on Stable Diffusion XL (SDXL) \citep{sdxl}. Finally, \Cref{fig:ldm-appendix} shows a qualitative comparison between unguided sampling and sampling with TSG for several latent diffusion models from \citet{rombachHighResolutionImageSynthesis2022}.

\clearpage

\algAppendixTFG
\algAppendixTSG

\begin{figure}[t!]
    \centering
    \begin{minted}
[
frame=lines,
framesep=2mm,
baselinestretch=1.2,
bgcolor=LG,
fontsize=\footnotesize,
linenos
]
{python}
def get_random_class():
    """Random class labels."""
    y_random = torch.randint(0, NUM_CLASSES, (BATCH_SIZE, ))
    return y_random

def get_random_text():
    """Random text tokens."""
    random_idx = torch.randint(0, NUM_TOKENS, (BATCH_SIZE, MAX_LENGTH))
    random_tokens = text_encoder(random_idx, attention_mask=None)[0]
    return random_tokens

def get_gaussian_noise_embedding(embeddings):
    """Random embedding based on Gaussian noise."""
    noise_embedding = torch.randn_like(embeddings) * embeddings.std()
    return noise_embedding

def get_gaussian_noise_image(image_cond):
    """Random condition for image-conditional models."""
    noise_embedding = torch.randn_like(image_cond) * SCALE
    return noise_embedding

\end{minted}
    \caption{Implementation details for {\cfgnew}. The figure presents pseudocode for implementing the random class, random text, and Gaussian noise embedding for the unconditional component in {\cfgnew}.}
    \label{fig:imp-icg}
\end{figure}

\begin{figure}[t!]
    \centering
    \begin{minted}
[
frame=lines,
framesep=2mm,
baselinestretch=1.2,
bgcolor=LG,
fontsize=\footnotesize,
linenos,
breaklines,
]
{python}
def get_constant_schedule(t_emb, t, std_scaling=True):
    """Applies TSG for a portion of sampling (t in [T_MIN, T_MAX])."""
    if t < T_MIN or t > T_MAX:
        return t_emb
    
    noise_scale = S
    if std_scaling:
        noise_scale = S * t_emb.std()    
    that_emb = t_emb + torch.randn_like(t_emb) * noise_scale
    return that_emb


def get_power_schedule(t_emb, t, std_scaling=True):
    """Applies TSG according to the power schedule."""
    if t < T_MIN or t > T_MAX:
        return t_emb
    noise_scale = S * t ** (ALPHA)
    if std_scaling:
        noise_scale = noise_scale * t_emb.std()
    that_emb = t_emb + torch.randn_like(t_emb) * noise_scale
    return that_emb

\end{minted}
    \caption{Implementation details for {\tsgsh}. We provide two scheduling techniques for perturbing the time-step embedding. We empirically found that both methods perform similarly.}
    \label{fig:imp-tsg}
\end{figure}

\begin{table}[t!]
    \centering
    \caption{Hyperparameters used for the {\cfgnew} experiments.}
    \label{tab:icg-hp}
    \begin{booktabs}{llcc}
        \toprule
        Model & ICG mode & ICG scale & CFG scale  \\
        \midrule
         DiT-XL/2 & Random class & 1.4 & 1.5\\
         Stable Diffusion & Random text & 3.0 & 4.0\\
         Pose-to-Image & Gaussian noise & 3.0 & 4.0\\
         MDM & Gaussian noise & 2.5 & 2.5\\
         \midrule
         EDM & Random class & 1.05 & 1.1\\
         EDM2 & Random class & 1.25 & 1.25\\
        \bottomrule
      \end{booktabs}
\end{table}

\begin{table}[t!]
    \centering
    \caption{Hyperparameters used for the {\tsgsh} experiments.}
    \label{tab:tsg-hp}
    \resizebox{\textwidth}{!}{
    \begin{booktabs}{lllcl}
        \toprule
        Model & Mode & TSG function & TSG scale & TSG parameters   \\
        \midrule
         DiT-XL/2 & Unconditional & $\tt{constant\_schedule}$ & 5.0 & $\tt{T\_{MIN}=200, T\_{MAX}=800}$, $s=1.0$ \\
         DiT-XL/2 & Conditional & $\tt{power\_schedule}$ & 2.5 & $\tt{T\_{MIN}=0, T\_{MAX}=1000}$, $\alpha=1$, $s=2$\\
         \midrule
         Stable Diffusion & Unconditional & $\tt{constant\_schedule}$ & 3.0 & $\tt{T\_{MIN}=100, T\_{MAX}=900}$, $s=1.25$\\
         Stable Diffusion & Conditional & $\tt{power\_schedule}$ & 4.0 & $\tt{T\_{MIN}=400, T\_{MAX}=1000}$, $s=3.0$,  $
         \alpha=0.25$\\
        \bottomrule
      \end{booktabs}
    }
\end{table}

\icgAppendixFig
\tsgAppendixFig
\tsgLDMAppendixFig

\end{document}